\newcommand{\normal}{\mathscr{N}}
\newcommand{\uniform}{\mathscr{U}}
\newcommand{\ligand}{x}
\newcommand{\pocket}{\xi}
\newcommand{\Ligand}{X}
\newcommand{\Pocket}{\Xi}
\newcommand{\noisyligand}{y}
\newcommand{\noisyLigand}{Y}
\newcommand{\hatligand}{\hat{x}}
\newcommand{\dligand}{d_{x}}
\newcommand{\dpocket}{d_\xi}
\newcommand{\cligand}{c_x}
\newcommand{\cpocket}{c_\xi}
\newcommand{\encoder}{$E_{\rm lig}$~}
\newcommand{\decoder}{$E_{\rm poc}$~}
\newtheorem{proposition}{Proposition}
\newcommand{\eg}{\emph{e.g.}}
\newcommand{\ie}{\emph{i.e.}}
\newcommand{\modelname}{VoxBind}  
\newlength\savewidth\newcommand\shline{\noalign{\global\savewidth\arrayrulewidth
  \global\arrayrulewidth 1pt}\hline\noalign{\global\arrayrulewidth\savewidth}}
\newcolumntype{x}[1]{>{\centering\arraybackslash}p{#1pt}}
\newcolumntype{y}[1]{>{\raggedright\arraybackslash}p{#1pt}}
\newcolumntype{z}[1]{>{\raggedleft\arraybackslash}p{#1pt}}
\newcommand{\app}{\raise.17ex\hbox{$\scriptstyle\sim$}}
\icmltitlerunning{Structure-based drug design by denoising voxel grids}
\begin{document}

\twocolumn[
\icmltitle{Structure-based drug design by denoising voxel grids}
\icmlsetsymbol{equal}{*}

\begin{icmlauthorlist}
\icmlauthor{Pedro O. Pinheiro}{pd}
\icmlauthor{Arian Jamasb}{pd}
\icmlauthor{Omar Mahmood}{pd}
\icmlauthor{Vishnu Sresht}{pd}
\icmlauthor{Saeed Saremi}{pd}
\end{icmlauthorlist}

\icmlaffiliation{pd}{Prescient Design, Genentech}
\icmlcorrespondingauthor{Pedro O. Pinheiro}{pedro@opinheiro.com}
\icmlcorrespondingauthor{Saeed Saremi}{saremis@gene.com}
\icmlkeywords{generative models, structure-based drug design}
\vskip 0.3in
]
\printAffiliationsAndNotice{} 

\begin{abstract} 
We present \modelname, a new score-based generative model for 3D molecules conditioned on protein structures.~Our approach represents molecules as 3D atomic density grids and leverages a 3D voxel-denoising network for learning and generation.
We extend the neural empirical Bayes formalism~\citep{saremi2019neural} to the conditional setting and generate structure-conditioned molecules with a two-step procedure:
(i) sample noisy molecules from the Gaussian-smoothed conditional distribution with underdamped Langevin MCMC using the learned score function and (ii) estimate clean molecules from the noisy samples with single-step denoising. 
Compared to the current state of the art, our model is simpler to train, significantly faster to sample from, and achieves better results on extensive \emph{in silico} benchmarks---the generated molecules are more diverse, exhibit fewer steric clashes, and bind with higher affinity to protein pockets.
The code is available at \url{https://github.com/genentech/voxbind/}.
\end{abstract}

\section{Introduction}\label{sec:introduction}
The goal of \emph{structure-based drug design} (SBDD)~\citep{blundell1996structure,anderson2003process} is to generate molecules that bind with high affinity to specific 3D structures of target biomolecules.
Traditional computational approaches, such as virtual screening, search over a library of molecules and score them to identify the best binders for a given target of interest~\citep{lyu2019ultra}. However, random search is arguably very inefficient as the chemical space grows exponentially with molecular size~\citep{fink2005virtual}.

Recently, many generative modeling methods have been proposed as alternatives to search-based SBDD (see~\citet{thomas2023integrating} for a review). 
In this setting, the goal is to develop data-driven approaches that generate molecules (\ie, \emph{ligands}) \emph{conditioned} on 3D protein binding sites (\ie, \emph{pockets}). 
Generative models have the promise of exploring the chemical space more efficiently and effectively than search-based approaches. 

SBDD generative models typically represent molecules as discrete voxel grids or atomic point clouds. Voxel-based approaches~\citep{wang2022pocket,ragoza2022generating,wang2022relation} represent atoms (or electron densities) as continuous densities and molecules as a discretization of 3D space into voxel grids (a voxel is a discrete unit of volume). Point-cloud based approaches (\eg,~\citet{luo20213d,schneuing2022structure}) treat atoms as points in 3D Euclidean space and rely on graph neural network (GNN) architectures. 
Current state of the art methods in data-driven SBDD~\citep{guan20233tdiff,guan23ddiff} operate on point clouds and are based on E(3) equivariant diffusion models~\citep{hoogeboom22edm} conditioned on protein pockets: they sample points from a Gaussian prior and iteratively apply the learned reverse conditional diffusion process (over continuous coordinates and discrete atom types and bonds) to generate molecules.

\begin{figure}[!t]
\begin{center}
\centerline{\includegraphics[width=\linewidth]{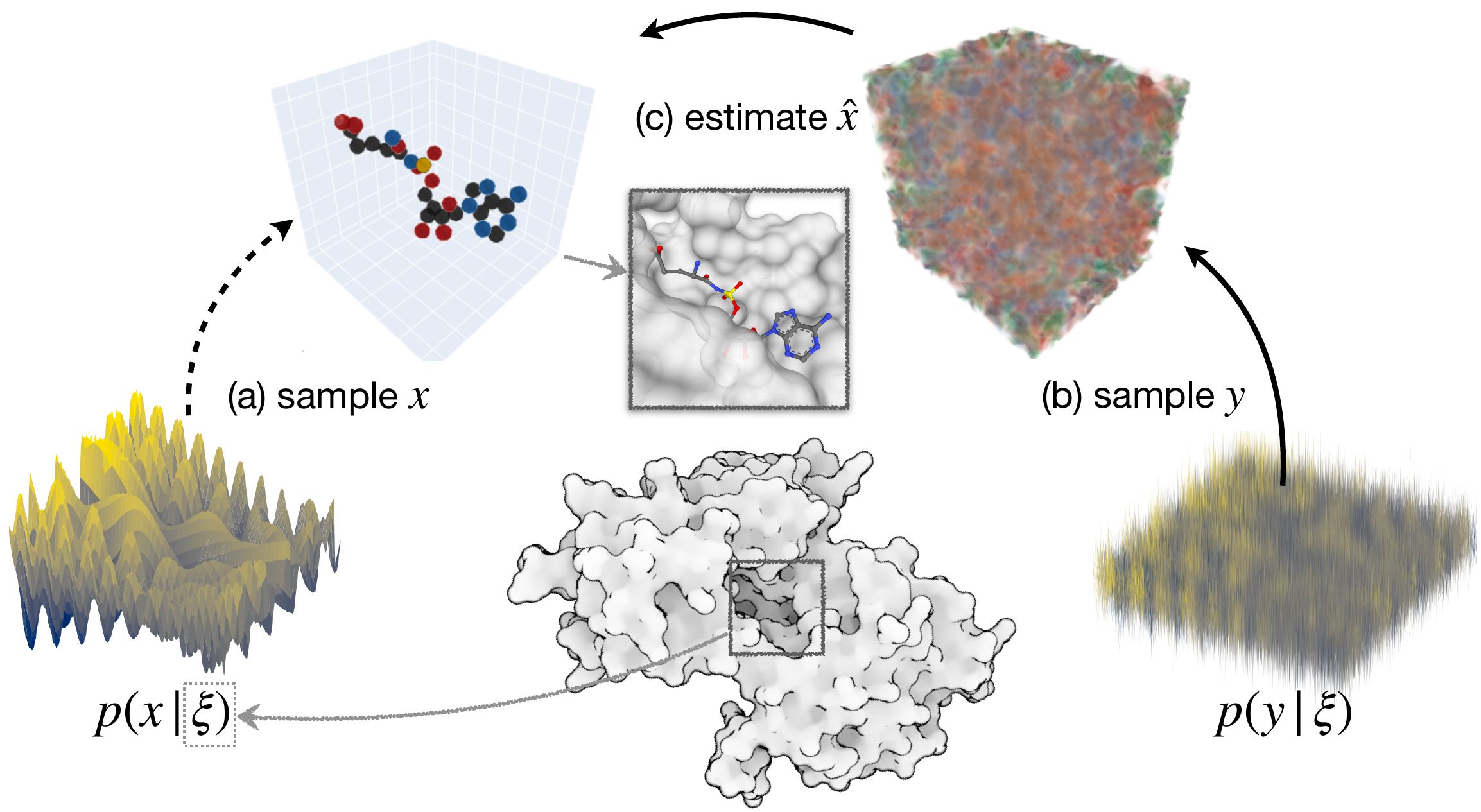}}
\vskip -0.1in
\caption{We are interested in sampling from $p(\ligand|\pocket)$, the distribution of ligands given pocket $\pocket$. 
This is challenging due to the high-dimensionality of the data. Therefore, instead of (a) sampling directly from this distribution, we generate ligands in a two-step procedure: (b) sample $\noisyligand$ from the Gaussian-smoothed distribution $p(\noisyligand|\pocket)$ and (c) estimate the ligand $\hatligand$ from $\noisyligand$ and $\pocket$.}
\label{fig:neb}
\end{center}
\vskip -0.2in
\end{figure}

There is a clear trade-off between the two data representation choices.
On one hand, GNNs can leverage SE(3)-equivariance inductive bias more easily than architectures that operate on voxels~\citep{geiger2022e3nn,weiler20183d}.
On the other hand, they are known to be less expressive due to the message passing formalism~\citep{xu2018powerful,morris2019weisfeiler,pozdnyakov2022incompleteness}. 
The expressivity of these models can be improved with higher-order message passing schemes~\citep{batatia2022mace}. However, they require additional computational cost and have not been applied to 3D generative models yet.
Recently, it has been shown empirically that non-equivariant---but more expressive---models are competitive with equivariant models on different domains, from computer vision~\cite{bai2021transformers,gruver2022lie} to molecule generation~\citep{flam2023language,pinheiro2023voxmol,wang2023generating}.
In fact, equivariance can be learned from large data and strong data augmentations.
Inspired by these findings, we propose a model for SBDD that prioritizes expressivity over SE(3)-equivariance inductive bias. 

Our model, \emph{\modelname}, is a new voxel-based method for 3D ligand generation conditioned on pocket structures. The proposed model generates molecules by extending the \emph{neural empirical Bayes} (NEB) framework~\citep{saremi2019neural, pinheiro2023voxmol} to the structure-conditional setting. 
Given a protein pocket $\pocket$, instead of sampling ligands $\ligand$ directly from $p(\ligand|\pocket)$, we follow a two-step procedure: (i) sample noisy molecules $y$ from the Gaussian-smoothed distribution $p(\noisyligand|\pocket)$ and (ii) estimate the clean ligand from $\noisyligand$ and $\pocket$. \autoref{fig:neb} illustrates our approach.
Sampling is done with Langevin Markov chain Monte Carlo (MCMC), which is known to work much better on the smoother distribution than on the original~\citep{saremi2019neural}. 
We train a conditional denoiser---a model that predicts a clean ligand, given a noisy version of it and its binding pocket---to approximate the conditional score function of the smoothed distribution and the ligand estimator, necessary for steps (i) and (ii), respectively. 

\modelname~is fundamentally different than current state of the art, \ie, diffusion models on point clouds.
First, voxelized representations allow us to use the same type of denoising architectures used in score-based generative models on images. These neural networks are very flexible, scalable and work well in many conditional generation applications~\cite{rombach2022high, saharia2022palette, saharia2205photorealistic}.
Second, convolutional filters (and self-attention layers applied on regular grids) can arguably capture 3D patterns, surfaces and shape complementarity---useful features for structure conditioning---more effectively than message passing (see the many-body representation hypothesis discussed by~\citet{townshend2020atom3d}).
Third, contrary to diffusion models, the noise process used in our approach does not displace atoms in space. This provides a natural way to avoid clashes between generated ligands and pockets.
Finally, our approach only requires a single (fixed) noise level, making training and sampling considerably simpler.

These differences are reflected on empirical results: \modelname~generates better ligands (conditioned on protein binding sites) on CrossDocked2020~\citep{francoeur2020three}, a standard dataset for this task. 
The molecules generated by our model bind with higher affinity to protein pockets (computed with standard docking score software), are more diverse, exhibit fewer steric clashes and lower strain energies. 

Our contributions can be summarized as follows. We propose \modelname: a new score-based generative model for structure-based drug design. 
The proposed method is new, simple and effective. We show that \modelname~outperforms current state of the art on an extensive number of metrics, while being significantly faster to generate samples.

\section{Related work}\label{sec:related_works}

\subsection{Unconditional 3D molecule generation}
Many recent 3D molecule generation methods represent atoms as points (with 3D coordinates, atom types and possibly other features) and molecules as a set of points.
For instance,~\citet{gebauer2018generating,gebauer2019symmetry,luo2022autoregressive} propose autoregressive approaches to generation, where points (atoms) are sampled iteratively over time, while~\citet{kohler2020equivariant,satorras2021n} generate molecules using normalizing flows~\citep{rezend2015normflow}.
\citet{hoogeboom22edm} propose the equivariant diffusion model (EDM), a diffusion-based~\citep{sohl2015diff} generative model that operates on point clouds. EDMs learn to denoise a diffusion process (operating on both continuous and categorical data) and generate molecules by iteratively applying the denoising network on an initial noise. Many follow-up work propose extensions to EDM~\citep{huang2022mdm,vignac2023midi,xu2023geometric}.

\citet{ragoza2020vox} propose a different way to generate molecules: map atomic densities to discrete voxel grids and leverage computer vision techniques for generation. In particular, they propose a generation method based on variational autoencoders~\citep{kingma2014vae}.
More recently,~\citet{pinheiro2023voxmol} propose VoxMol, a score-based generative model that operates on voxelized grids. They show that voxel-based representations achieve results comparable to point-cloud representations on unconditional molecule generation. Our work can be seen as an extension of VoxMol (and NEB) to the conditional generation setting.

\subsection{Pocket-conditional molecule generation}
The first methods proposed for this task represent molecules as 3D voxel grids and use 3D convnet architectures. For instance,~\citet{skalic2019shape} and~\citet{wang2022relation} train models to generate ligand SMILES from voxelized pocket structures.~\citet{ragoza2022generating} uses conditional variational autoencoder to generate 3D ligands conditioned on voxelized pocket structures.~\citet{long2022zero} encodes molecular shapes with voxels and propose a model that generates molecular graphs conditioned on the voxelized shapes.
Our method also utilizes voxelized molecules, but differs from previous work in terms of how we voxelize molecules, the network architecture and the generative model we use.

More recent methods, however, use point-cloud representations and SE(3)-equivariant graph neural networks.~\citet{luo2022autoregressive,liu2022generating,peng2022pocket2mol} propose conditional autoregressive methods that add atoms (and their corresponding bonds) one at a time, while~\citet{rozenberg2023structure} propose a method based on normalizing flows.
Other authors propose methods that generate molecules by iteratively adding fragments conditioned either on ligand~\citep{adams2022equivariant} or pocket~\citep{zhang2023molecule,powers2023geometric} shapes.

Current state of the art models for SBDD are based on point-cloud diffusion models. DiffSBDD~\citep{schneuing2022structure} and TargetDiff~\citep{guan20233tdiff} adapt EDMs to the pocket-conditional generation setting. DecompDiff~\citep{guan23ddiff} extends TargetDiff with three modifications: they decompose ligand into fragment priors, apply diffusion process on bonds (in addition to atom types and coordinates), and add guidance during generation process. 

This work shows that we don't necessarily need point clouds or SE(3)-equivariant networks to achieve competitive results on structure-based drug design tasks. As we demonstrate in this paper, voxel-based representations---when coupled with a powerful generative model and an expressive network---can achieve state-of-the-art results.

\subsection{Conditional image generation}\label{sec:cond_img_gen}
Conditional image generation has been extremely successful and widely applied to many different computer vision tasks: text-to-image generation~\citep{rombach2022high}, super-resolution~\citep{saharia2022image}, optical flow and depth estimation~\citep{saxena2023surprising}, colorization~\citep{saharia2022palette}, in-painting~\citep{lugmayr2022repaint} and semantic segmentation~\citep{amit2021segdiff}. All these methods share a commonality that inspired our approach: 
they adapt score-based generative models to the conditional setting by modifying the very expressive U-Net architecture~\citep{ronneberger2015u} to incorporate the conditioning signal.

\section{Method}\label{sec:method}
\subsection{Voxelized molecules}
We represent atoms as spherical densities in 3D space decaying exponentially with the square distance to their atomic center (see~\eqref{eq:atomic_density} in the appendix). Voxelized molecules are created by discretizing the space around atoms into voxel grids, where the value at each voxel represents atomic occupancy. 
Voxels take values between 0 (far away from all atoms) and 1 (at the center of atoms).
Ligands and pockets are each represented as cubic grids with side length $L\in\mathbb{N}$. 
Each ligand grid and its corresponding pocket grid are centered around the center of mass of the ligand.
We assume ligands have $\cligand$ atom types and pockets have $\cpocket$. Each atom type (element) is represented by a different grid channel (similar to R,G,B channels of images).

We consider a dataset with $n$ voxelized ligand-pocket binding pairs $\{(\ligand, \pocket)_i\}_{i=1}^n$, where $\ligand\in\mathbb{R}^{\dligand}$ is the voxelized ligand and $\pocket \in\mathbb{R}^{\dpocket}$ the voxelized pocket with dimensions $\dligand=\cligand L^3$ and $\dpocket=\cpocket L^3$, respectively. 
Following previous work~\citep{pinheiro2023voxmol}, we consider a fixed grid resolution of .25\AA~and a grid length $L=64$. Therefore, the grids in the dataset occupy a volume of $16^3$ cubic \AA ngstr\"oms. See~\autoref{sec:appendix_vox_mols} (appendix) for further details on the voxelization procedure. 

\begin{figure*}[t]
\begin{center}
\centerline{\includegraphics[width=\linewidth]{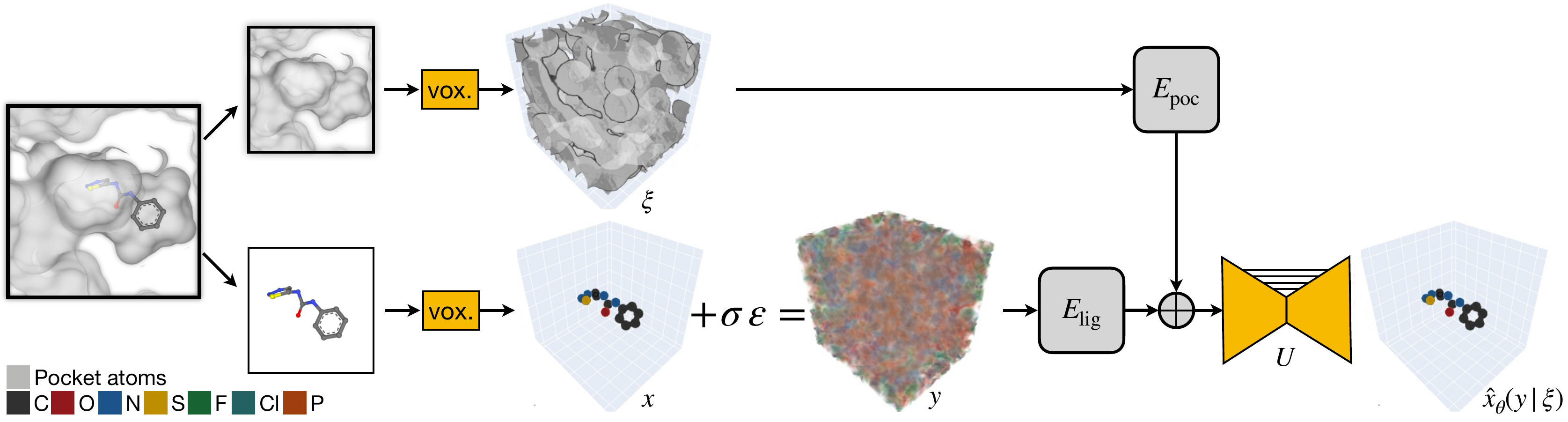}}
\vskip -0.1in
\caption{Conditional denoiser architecture. Given a ligand-pocket complex sample, we discretize each molecule resulting in the voxelized ligand $\ligand$ and pocket $\pocket$. The ligand is corrupted by Gaussian noise with noise level $\sigma$. The corrupted ligand and pocket are encoded into a common embedding space (with the same spatial dimensions as the inputs) with encoders $E_{\rm lig}$ and $E_{\rm poc}$, respectively. The two representations are added together and forwarded through a 3D U-Net $U$ to recover the clean version of the ligand. To facilitate visualization, we threshold the grid values, $\hat{x}=\mathbbm{1}_{\ge.1}(\hat{x})$. }
\label{fig:architecture}
\end{center}
\vskip -0.2in
\end{figure*}

\subsection{Conditional neural empirical Bayes} 
Let $\Ligand$ and $\Pocket$ be the random variables associated with voxelized ligands and pockets, taking values in $\mathbb{R}^{\dligand}$ and $\mathbb{R}^{\dpocket}$, respectively, with the joint density $p(\ligand,\pocket)$. Given $\Pocket=\pocket$, we would like to learn the smoothed conditional score function $\nabla_{\noisyligand} \log p(\noisyligand|\pocket)$, where the smoothed conditional density $p(\noisyligand|\pocket)$ is obtained by convolving $p(\ligand|\pocket)$ with an isotropic Gaussian kernel:
\begin{equation} \label{eq:noise-factorization} p(\noisyligand|\pocket) = \int p(\noisyligand|\ligand)p(\ligand|\pocket) d\ligand,
\end{equation}
where 
\begin{equation} \label{eq:gauss}
	p(\noisyligand|\ligand) = \frac{1}{(2\pi \sigma^2)^{\dligand/2}} \exp\Bigl(-\frac{\Vert \noisyligand- \ligand \Vert^2}{2 \sigma^2} \Bigr),
\end{equation}
equivalently $\noisyLigand|\ligand \sim \normal(\ligand,\sigma^2 I_{\dligand})$. The factorization \eqref{eq:noise-factorization} is mathematically equivalent to setting up the noise process such that conditioned on the ligand $\ligand$, the noisy ligand $\noisyligand$ is independent of the pocket $\pocket$:   $p(\noisyligand|\ligand,\pocket) = p(\noisyligand|\ligand)$. Below we see that due to this conditional independence the empirical Bayes results used in~\citep{saremi2019neural} readily generalize to our conditional setting. 

In particular, we would like to relate $\hatligand(\noisyligand|\pocket)$, the conditional least-squares Bayes estimator of $\Ligand$ given $(\noisyligand,\pocket)$, to the smoothed conditional score function $g(\cdot|\pocket)$, defined by \begin{equation} \label{eq:conditional-score}
    g(\noisyligand|\pocket) = \nabla_{\noisyligand} \log p(\noisyligand|\pocket).
\end{equation}
Due to our noise process given by \eqref{eq:noise-factorization}, the conditional Bayes estimator $\hatligand(\noisyligand|\pocket)=\mathbb{E}[\Ligand|\noisyligand,\pocket]$ is given by
\begin{equation} \label{eq:conditonal-bayes-estimator}
    \hatligand(\noisyligand|\pocket) = \frac{\int \ligand p(\noisyligand|\ligand)  p(\ligand|\pocket)  d\ligand}{\int p(\noisyligand|\ligand) p(\ligand|\pocket) d\ligand}.
\end{equation}
This leads to the following:
\begin{proposition} \label{prop:conditional-tweedie} Given the noise process \eqref{eq:noise-factorization} and \eqref{eq:gauss}, the conditional Bayes estimator \eqref{eq:conditonal-bayes-estimator} can be written in closed form in terms of the conditional score function \eqref{eq:conditional-score}: 
	\begin{equation} \label{eq:conditional-tweedie}
  \hatligand(\noisyligand|\pocket) =  \noisyligand + \sigma^2 g(\noisyligand|\pocket).
\end{equation}
\end{proposition}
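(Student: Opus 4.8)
The plan is to prove this by direct computation, establishing the conditional analogue of Tweedie's formula exactly as one does in the unconditional case, with the pocket $\pocket$ carried along as a fixed parameter throughout. The whole argument hinges on a single elementary property of the Gaussian kernel \eqref{eq:gauss}, namely that its gradient with respect to the noisy argument is proportional to itself: $\nabla_{\noisyligand} p(\noisyligand|\ligand) = \sigma^{-2}(\ligand - \noisyligand)\, p(\noisyligand|\ligand)$. This identity is what converts a score (a logarithmic derivative) into the difference between a posterior mean and the observation.

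First I would write the conditional score \eqref{eq:conditional-score} as $g(\noisyligand|\pocket) = \nabla_{\noisyligand} p(\noisyligand|\pocket) / p(\noisyligand|\pocket)$, using that $p(\noisyligand|\pocket)>0$ everywhere (guaranteed because the Gaussian kernel is strictly positive and $p(\ligand|\pocket)$ integrates to one, so the denominator never vanishes). Next I would differentiate the defining integral \eqref{eq:noise-factorization} under the integral sign, move $\nabla_{\noisyligand}$ inside, and apply the Gaussian identity above to obtain $\nabla_{\noisyligand} p(\noisyligand|\pocket) = \sigma^{-2}\int (\ligand-\noisyligand)\, p(\noisyligand|\ligand)\, p(\ligand|\pocket)\, d\ligand$. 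Dividing by $p(\noisyligand|\pocket) = \int p(\noisyligand|\ligand)\, p(\ligand|\pocket)\, d\ligand$ and splitting the integrand into its $\ligand$ and $\noisyligand$ parts, the first term is precisely the conditional Bayes estimator \eqref{eq:conditonal-bayes-estimator}, while the second collapses to $\noisyligand$ because the remaining ratio equals one. This yields $\sigma^2 g(\noisyligand|\pocket) = \hatligand(\noisyligand|\pocket) - \noisyligand$, which rearranges to \eqref{eq:conditional-tweedie}.

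The only real obstacle is the technical justification for interchanging $\nabla_{\noisyligand}$ with the integral over $\ligand$. I would handle this with a standard dominated-convergence / Leibniz-rule argument: the integrand $p(\noisyligand|\ligand)\, p(\ligand|\pocket)$ is smooth in $\noisyligand$, and its gradient is dominated, locally uniformly in $\noisyligand$, by an integrable function of $\ligand$, since the linear-times-Gaussian factor $(\ligand-\noisyligand)\,p(\noisyligand|\ligand)$ decays faster than any power while $p(\ligand|\pocket)$ is a probability density. The conditional independence $p(\noisyligand|\ligand,\pocket)=p(\noisyligand|\ligand)$ noted after \eqref{eq:gauss} is what lets the pocket ride along passively: it never enters the Gaussian, so the derivative identity is untouched, and this is exactly the observation that reduces the whole proof to the unconditional Tweedie computation of \citet{saremi2019neural}.
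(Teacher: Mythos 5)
Your proposal is correct and follows essentially the same route as the paper's proof: both rest on the Gaussian identity $\sigma^2 \nabla_{\noisyligand} p(\noisyligand|\ligand) = (\ligand-\noisyligand)\,p(\noisyligand|\ligand)$, integrate it against $p(\ligand|\pocket)$ using the conditional independence built into \eqref{eq:noise-factorization}, and divide by $p(\noisyligand|\pocket)$ to identify the Bayes estimator \eqref{eq:conditonal-bayes-estimator}. The only difference is cosmetic: you differentiate under the integral sign and justify the interchange via dominated convergence (and note positivity of the denominator), steps the paper simply asserts by saying the gradient and integral commute.
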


\begin{proof}
The derivation is simple and similar to the known classical results~\citep{robbins1956empirical}, but we provide it here for completeness. We start with the following property of the Gaussian kernel \eqref{eq:gauss}:
$$\sigma^2 \nabla_{\noisyligand} p(\noisyligand|\ligand) = (\ligand-\noisyligand) p(\noisyligand|\ligand).$$
We then multiply both sides of the above equation by $ p(\ligand|\pocket)$ and integrate over $\ligand$. Since $\nabla_{\noisyligand}$ and the integration over $\ligand$ commute, using \eqref{eq:noise-factorization}, we arrive at
$$
 \sigma^2 \nabla_{\noisyligand} p(\noisyligand|\pocket) = \int \ligand  p(\noisyligand|\ligand)  p(\ligand|\pocket)  d\ligand - \noisyligand  p(\noisyligand|\pocket).
$$
Eq. \eqref{eq:conditional-tweedie} follows through by dividing both sides of the above equation by $p(\noisyligand|\pocket)$ and using \eqref{eq:noise-factorization}, \eqref{eq:conditional-score}, and \eqref{eq:conditonal-bayes-estimator}.\footnote{The relation \eqref{eq:conditional-tweedie} between the conditional estimator and the conditional score function is the conditional form of what is referred to in the literature as Tweedie's formula, first derived for Gaussian kernels by \citet{miyasawa1961empirical}.} 
\end{proof}

Due to this extension, summarized by \autoref{prop:conditional-tweedie}, the (unconditional) neural empirical Bayes framework~\citep{saremi2019neural} can be readily adopted for both learning the  conditional score function $\nabla_y \log p(\noisyligand|\pocket)$ and drawing approximate samples from $p(\ligand|\pocket)$. By learning the conditional score function with a least-squares denoising objective we are able to sample from $p(\noisyligand|\pocket)$ using Langevin MCMC (\emph{walk}), and also able to estimate clean ligands conditioned on the pocket using the Bayes estimator (\emph{jump}), thus drawing approximate samples from $p(\ligand|\pocket)$. We refer to this sampling scheme  as \emph{conditional walk-jump sampling} (cWJS). The details on both learning and sampling are given next.


\subsection{Conditional denoiser}
From the perspective of learning $p(\noisyligand|\pocket)$, the key property of \eqref{eq:conditional-tweedie} is the appearance of the \emph{score function}~\citep{hyvarinen2005estimation} in the right hand side. This means that in setting up the learning objective we do not have to worry about the intractable partition function~\citep{saremi2019neural}. In this paper  we parametrize the conditional denoiser, \ie, the left hand side of \eqref{eq:conditional-tweedie}, from which the conditional score function is derived.

There are many  ways to construct a conditional voxel denoiser. In this work we focus on a network design that is simple, scalable, and shows good empirical results.
We follow the conditional image generation literature (see \autoref{sec:cond_img_gen}) and propose the following architecture: (i) encode the noisy ligand and the pocket with separate encoders, (ii) merge their representations and (iii) pass through an encoder-decoder architecture to predict clean sample. More precisely:
\begin{equation}
\hatligand_\theta(\noisyligand|\pocket) = 
    U_{\theta_3}(E_{\rm lig}(\noisyligand;\theta_1) + E_{\rm poc}(\pocket;\theta_2)),
    \label{eq:arquitecture}
\end{equation}
where $\hat{x}_\theta:\mathbb{R}^{\dligand}\times\mathbb{R}^{\dpocket}\rightarrow\mathbb{R}^{\dligand}$ is parameterized by $\theta=(\theta_1,\theta_2,\theta_3)$, \encoder and \decoder encode the noisy ligand $\noisyligand$ and pocket $\pocket$ to a common representation $\mathbb{R}^{d_e}$, $d_e=c_eL^3$, with the same spatial dimensions as input, and $U$ is a 3D U-Net. \autoref{fig:architecture} shows an overview of the model architecture.

\begin{figure*}[!th]
\begin{center}
\centerline{\includegraphics[width=\linewidth]{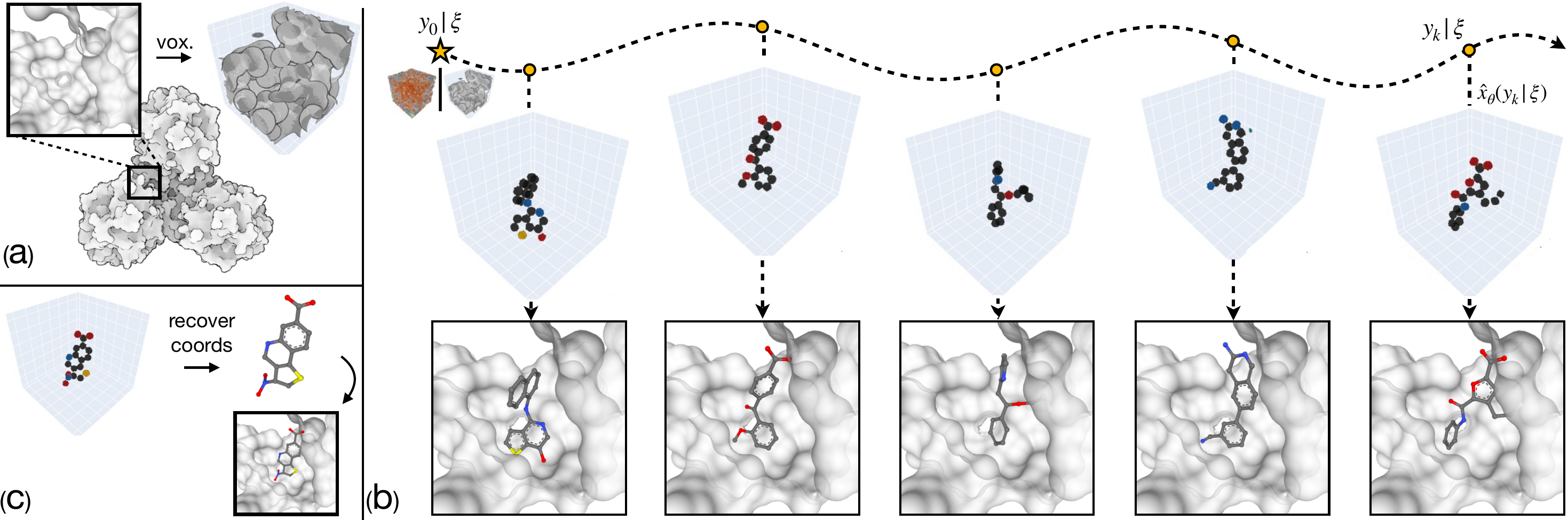}}
\vskip -0.1in\caption{Illustration of pocket-conditional walk-jump sampling chain. (a) First, we voxelize a given protein binding pocket. (b) Then, we sample noisy voxelized ligands (given the pocket) with Langevin MCMC and estimate clean samples with the estimator. (c) Finally, we recover the atomic coordinates from voxel grids. In this figure, jumps are done at every $\Delta k=100$ walk steps.}
\label{fig:wjs}
\end{center}
\vskip -0.2in
\end{figure*}

Training samples are generated by first sampling a ligand-pocket complex from the empirical distribution and voxelize each of them, resulting in the pair $(\ligand,\pocket)$. We add Gaussian noise drawn from $\normal(0,\sigma^2 I_{\dligand})$ to the ligand: $\noisyligand = \ligand + \sigma\, \varepsilon,\ \varepsilon\sim\normal(0,I_{d_x})$.
The denoising network \eqref{eq:arquitecture} takes the pair $(\noisyligand, \pocket)$ as input and should output $\ligand$, the clean ligand. 
The model is trained by minimizing the mean square error over all voxels in all voxelized ligands, \ie, by minimizing the following loss:
\begin{equation}
    \mathcal{L}(\theta)= \mathbb{E}_{
       \substack{(\ligand,\pocket)\sim p(\ligand,\pocket), \\ \varepsilon \sim \normal(0,I_{\dligand})}
    }
    \big\Vert \hat{x}_\theta(\ligand + \sigma \varepsilon | \pocket) - \ligand \big\Vert^2.  
    \label{eq:loss}
\end{equation}
Finally, following learning, we can approximate the score function of the pocket-conditional distribution using~\eqref{eq:conditional-tweedie}:
\begin{equation}
    g_\theta(\noisyligand| \pocket)=\frac{1}{\sigma^2} (\hatligand_\theta(\noisyligand|\pocket) - \noisyligand).    \label{eq:param_score_funcion}
\end{equation}

\subsection{Conditional walk-jump sampling}
\begin{algorithm}[t]
\caption{cWJS, using the BAOAB scheme~\citep[Algorithm 1]{sachs2017langevin} adapted for the conditional ULD \eqref{eq:lmcmc}. 
}
  \begin{algorithmic}[1]
    \STATE \textbf{Input} Learned conditional estimator $\hat{x}_\theta$, score function $g_\theta$ (from~\eqref{eq:param_score_funcion}), noise level $\sigma$, protein pocket $\pocket$
    \STATE \textbf{Input} step size $\delta$, friction $\gamma$, steps taken $K$ 
    \STATE \textbf{Output} $\widehat{x}$
    \STATE $y_0\sim \normal(0,\sigma^2I_{\dligand})+\uniform_{\dligand}(0,1)$ 
    \STATE $v_0 \leftarrow 0$    
    
    \FOR{$k=0,\dots, K-1$} 
      \STATE $y_{k + 1}  \leftarrow y_k + \frac{\delta}{2} v_{k}$
      \STATE $g   \leftarrow  g_{\theta}(y_{k+1}|\pocket)  $
      \STATE $v_{k +1} \leftarrow v_k  + \frac{\delta}{2} g$
      \STATE $\varepsilon\sim \normal(0,I_{\dligand}) $ 
      \STATE $v_{k+1} \leftarrow {\rm e}^{-\gamma\delta} v_{k+1} + \frac{\delta}{2} g + \sqrt{\left(1-{\rm e}^{-2 \gamma\delta }\right)} \varepsilon$
      \STATE $y_{k+1}\leftarrow y_{k+1} + \frac{\delta}{2} v_{k+1}$
    \ENDFOR
    \STATE $\hat{x}  \leftarrow \hat{x}_\theta(y_K|\pocket)$
  \end{algorithmic}
  \label{alg:wjs}
\end{algorithm}
We leverage the estimator~\eqref{eq:arquitecture}, and the learned score function~\eqref{eq:param_score_funcion} to sample voxelized ligands conditioned on (voxelized) protein pockets. 
This is done by  adapting the walk-jump  scheme~\citep{saremi2019neural, pinheiro2023voxmol, frey2023learning} to the conditional setting, resulting in the following two-step approach, which we call conditional walk-jump sampling (cWJS) (see Algorithm~\ref{alg:wjs}):
\begin{enumerate}[(i)]
\item \emph{(walk step)}
Sample noisy ligands from $p(\noisyligand|\pocket)$ with Langevin MCMC. We consider the conditional form of the underdamped Langevin diffusion (ULD):  
\begin{equation}
    \begin{split} 
    dv_t &= -\gamma v_t dt + g_\theta(\noisyligand_t|\pocket)dt + \sqrt{2\gamma}\,dB_t,\\
    d\noisyligand_t &= v_t dt,
    \end{split}
    \label{eq:lmcmc}
\end{equation}
where $(dB_t)_{t\geq 0}$ is the standard Brownian motion in $\mathbb{R}^{\dligand}$ and $\gamma$ is the friction. We use the discretization algorithm proposed by~\citet{sachs2017langevin} to generate samples $y_k$ (the inner loop of Algorithm~\ref{alg:wjs}).
\item\emph{(jump step)} 
At an arbitrary time step $K$, typically after a burn-in time, we estimate clean molecules with the conditional estimator, \ie, $\hatligand_K=\hatligand_\theta(y_K|\pocket)$. Since jumps do not interact with the walks, this step can be moved to inner loop at the cost of memory (storing clean samples). More explanations are given below.
\end{enumerate}

\renewcommand{\arraystretch}{1.2} 
\renewcommand\cellset{\renewcommand\arraystretch{0.3}%
\setlength\extrarowheight{10pt}}

\begin{table*}[!th]
  \caption{Results on CrossDocked2020 test set. Arrows $\uparrow$/$\downarrow$ denote that higher/lower numbers are better, respectively. For the last column, numbers close to Reference are better. Baseline results are taken from~\citep{guan23ddiff}. For each metric, we {\bf bold} and \underline{underline} the best and second best methods, respectively. Our results are shown with mean/standard deviation across 3 runs. $^{\dagger}$This method uses different training assumptions and relies on (external) rule-based algorithms to generate samples.}\label{tab:xdocked_res}
  \resizebox{1.\textwidth}{!}{%
  \begin{tabular}{l | cc | cc | cc | cc | cc | cc | cc | cc}
    & \multicolumn{2}{c|}{\small VinaScore ($\downarrow$)} & \multicolumn{2}{c|}{\small VinaMin ($\downarrow$)} & \multicolumn{2}{c|}{\small VinaDock ($\downarrow$)} & \multicolumn{2}{c|}{\small High aff. (\%, $\uparrow$)} & \multicolumn{2}{c|}{\small QED ($\uparrow$)} & \multicolumn{2}{c|}{\small SA ($\uparrow$)} & \multicolumn{2}{c|}{\small Diversity ($\uparrow$)} & \multicolumn{2}{c}{\small \# atoms / mol.}\\
    
    & {\small Avg.} & {\small Med.} & {\small Avg.} & {\small Med.} &  {\small Avg.} & {\small Med.} & {\small Avg.} & {\small Med.} &{\small Avg.} & {\small Med.} & {\small Avg.} & {\small Med.} & {\small Avg.} & {\small Med.} & {\small Avg.} & {\small Med.} \\
    \shline
    \emph{Reference} & -6.36 & -6.46 & -6.71 & -6.49 & -7.45 & -7.26 & - & - & .48 & .47 & .73 & .74 & - & - & 22.8 & 21.5 \\
    \hline
    AR & -5.75 & -5.64 & -6.18 & -5.88 & -6.75 & -6.62 & 37.9 & 31.0 & .51 & .50 & .63 & .63 & .70 & .70 & 17.6 & 16.0 \\
    
    Pocket2mol & -5.14 & -4.70 & -6.42 & -5.82 & -7.15 & -6.79 & 48.4 & 51.0 & \underline{.56} & \underline{.57} & {\bf .74} & {\bf .75} & .69 & .71 & 17.7 & 15.0   \\
    DiffSBDD & -1.94 & -4.24 & -5.85 & -5.94 & -7.00 &-6.90  & 46.3 & 47.2 & .48 & .48 & .58 & .57 & \underline{.73} & .72& 24.0 & \underline{23.0}\\
    
    TargetDiff & -5.47 & -6.30 & -6.64 & -6.83 & -7.80 & -7.91 & 58.1 & 59.1 & .48 & .48 & .58 & .58 & .72 & .71 & 24.2 & 24.0 \\
    
    DecompDiff$^{\dagger}$ & -5.67 & -6.04 & -7.04 & -7.09 & {\bf -8.39} & {\bf-8.43} & \underline{64.4} & \underline{71.0} & .45 & .43 & .61 & .60 & .68 & .68 & 20.9 & {\bf 21.0} \\

    \hline

    \modelname $_{\sigma=0.9}$& \makecell{{\bf -6.94}\\\tiny{($\pm$.00)}} & \makecell{{\bf -7.11}\\\tiny{($\pm$.04)}} & \makecell{{\bf -7.54}\\\tiny{($\pm$.01)}} & \makecell{{\bf -7.55}\\\tiny{($\pm$.04)}} & \makecell{ \underline{-8.30}\\\tiny{($\pm$.02)}} & \makecell{\underline{-8.41}\\\tiny{($\pm$0.1)}} & \makecell{{\bf 71.3}\\\tiny{($\pm$2.4)}} & \makecell{{\bf 71.5}\\\tiny{($\pm$.00)}} & \makecell{{\bf .57}\\\tiny{($\pm$.00)}} & \makecell{{\bf .58}\\\tiny{($\pm$.00)}} & \makecell{\underline{.70}\\\tiny{($\pm$.00)}} & \makecell{\underline{.69}\\\tiny{($\pm$.00)}} & \makecell{\underline{.73}\\\tiny{($\pm$.00)}} & \makecell{\underline{.74}\\\tiny{($\pm$.00)}} & \makecell{{\bf 23.4}\\\tiny{($\pm$.1)}} & \makecell{24.0\\\tiny{($\pm$.0)}}   \\  

    \modelname $_{\sigma=1.0}$& \makecell{\underline{-6.63}\\\tiny{($\pm$.03)}} & \makecell{\underline{-6.70}\\\tiny{($\pm$.02)}} & \makecell{\underline{-7.12}\\\tiny{($\pm$.02)}} & \makecell{\underline{-7.18}\\\tiny{($\pm$.03)}} & \makecell{ -7.82\\\tiny{($\pm$.01)}} & \makecell{-7.89\\\tiny{($\pm$0.01)}} & \makecell{{58.3}\\\tiny{($\pm$0.2)}} & \makecell{{61.5}\\\tiny{($\pm$.00)}} & \makecell{{ .55}\\\tiny{($\pm$.03)}} & \makecell{\underline{.57}\\\tiny{($\pm$.00)}} & \makecell{.69\\\tiny{($\pm$.00)}} & \makecell{\underline{.69}\\\tiny{($\pm$.00)}} & \makecell{{\bf .75}\\\tiny{($\pm$.00)}} & \makecell{{\bf .76}\\\tiny{($\pm$.00)}} & \makecell{\underline{21.7}\\\tiny{($\pm$.1)}} & \makecell{{\bf 22.0}\\\tiny{($\pm$.0)}}   \\  
    
  \end{tabular}
  }
\end{table*}
As we alluded to above, a key property of cWJS (which is borrowed from WJS) is that the jumps are decoupled from the walks~\citep{saremi2019neural}. Therefore, we can estimate samples at any arbitrary step of the MCMC chain. This is in contrast to diffusion models, where every sample requires a full MCMC chain to go from noise to sample. \autoref{fig:wjs} illustrates the generation process starting from noise, estimating clean molecules (given the pocket) at each 100 steps.

Algorithm~\ref{alg:wjs} describes a simple implementation of the (``\emph{de novo}'') conditional walk-jump sampling method: initialize a chain (lines 4-5), walk $K$ steps (lines 6-13) and jump (line 14) to estimate sample $\hat{x}$, given pocket $\pocket$. We use the discretization scheme by~\citep[Algorithm 1]{sachs2017langevin}, which requires the step size $\delta$. Due to the decoupling of the jumps, line 14 in the algorithm can also be evaluate at the end of the inner loop $\hat{x}_{k+1} \leftarrow \hat{x}_\theta(\noisyligand_{k+1}|\pocket)$ (or preferably not at every step, but at some desired frequency $\Delta k$), \eg, in \autoref{fig:wjs} we set $\Delta k = 100$.

Finally, we run the same peak detection post-processing of VoxMol~\citep{pinheiro2023voxmol} to recover the atomic coordinates from the generated voxelized ligand. See \autoref{sec:appendix_sampling} (appendix) for more implementation details on how we perform sampling.

\section{Experiments}\label{sec:experiments}

\subsection{Experimental setup}
\paragraph{Dataset.}
We benchmark our model on CrossDocked2020~\citep{francoeur2020three}, a popular dataset for pocket-conditional molecule generation. 
We follow previous work~\citep{schneuing2022structure,guan20233tdiff,guan23ddiff} and use the pre-processing and splitting proposed by ~\citep{luo20213d}: the cross-docked dataset is reduced from 22.5M to 100,100 pairs of ``high-quality'' ligand-pocket pairs. The pockets are clustered (sequence identity $<30\%$ using MMseqs2~\citep{steinegger2017mmseqs2}) and 100,000 and 100 ligand-pocket pairs are partitioned into the training and test sets, respectively. We take 100 samples from the training set as our hold-out validation set. 

Ligands are represented with seven chemical elements (C, O, N, S, F, Cl and P) and protein pockets with four (C, O, N, S). Both molecules are modeled with implicit hydrogens. In the case of protein pockets, we consider all heavy atoms of every amino acid. Each ligand and its associated pocket are centered around the center of mass of the ligand and discretized on a cubic grid with length 64, resulting in tensors of dimensions $\mathbb{R}^{7\times64\times64\times64}$ and $\mathbb{R}^{4\times64\times64\times64}$ for ligand and pocket, respectively. During training, we apply random translation (uniform value between $[-1,1]$ \AA~on 3D coordinates) and rotation (uniform value between $[0,2\pi)$ on three Euler angles) to each ligand-protein training sample. These augmentations are applied to both molecules so that their relative pose remains unchanged. We show the importance of data augmentation during training on the ablation study presented on~\autoref{sec:ablation_data_aug} (appendix).

\begin{figure*}[!th]
    \begin{center}
    \centerline{\includegraphics[width=\linewidth]{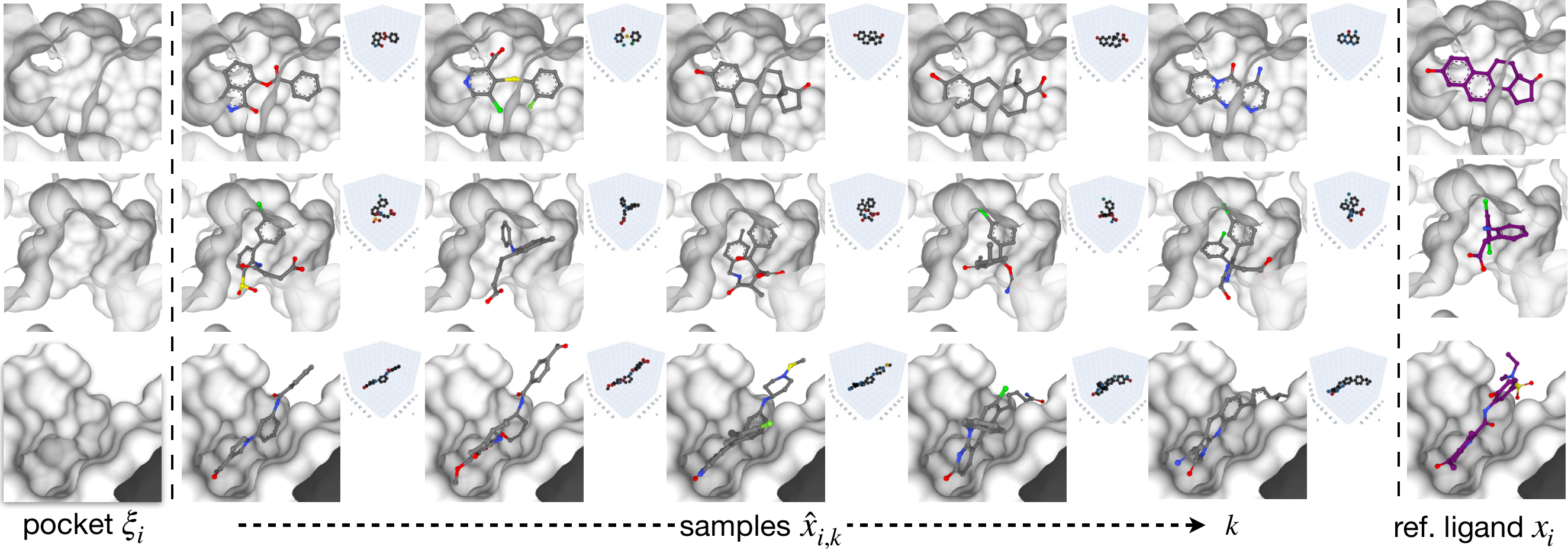}}
    \vskip -0.1in
    \caption{Example of generated ligands $\hat{x}_{i,k}$ given pocket $\pocket_i$. Each row represents a single chain of samples for a given protein pocket (\texttt{1E3R}, \texttt{2I2Z}, \texttt{5CRZ} from top to bottom). For each generated sample, we show the ligand-pocket complex and the generated voxelized molecule. The samples from each row are generated from the same MCMC chain. The provided ground-truth ligands are shown on the last column.}
    \label{fig:qualitative_res}
    \end{center}
    \vskip -0.2in
\end{figure*}

\vspace{-.1in}\paragraph{Architecture.}
Encoders \encoder and \decoder map the voxelized molecules to a common embedding space with latent dimension $c_e=16$ and same spatial dimensions, \ie, in $\mathbb{R}^{d_e},d_e=c_e\times64^3$. The encoders contain two residual blocks, each having two padded $3\times3\times3$ convolutional layers with 16 units followed by SiLU non-linearities~\citep{elfwing2018sigmoid}. The two embeddings are added together and the merged representation goes through a 3D U-Net architecture~\citep{ronneberger2015u} which predict the clean ligand, of dimension $\dligand=7\times64^3$. We use the same 3D U-Net architecture as in~\citep{pinheiro2023voxmol}, only modifying the dimensionality of the input and output.
We train two versions of the model, one with noise level $\sigma=0.9$ and another with $\sigma=1.0$.

The models are trained with batch size of 64, learning rate $10^{-5}$ and weight decay $10^{-2}$. The weights are updated with AdamW optimizer~\citep{loshchilov2019decoupled} ($\beta_1=.9$, $\beta_2=.999$) and exponential moving average with decay $.999$. Both models are trained for 340 epochs on four NVIDIA A100 GPUs (a total of ten days). 
For benchmarking purposes, we generate samples by repeatedly applying Algorithm~\ref{alg:wjs}, with $K=500$ steps. We use $\delta=\sigma/2$ and fix $\gamma=1/\delta$, \ie, the ``effective friction''~\citep{saremi2023universal} is set to $1$.

\vspace{-.1in}\paragraph{Baselines.}
We compare our method with various recent benchmarks for pocket-conditional ligand generation: two autoregressive models on point clouds (\emph{AR}~\citep{luo20213d} and \emph{Pocket2Mol}~\cite{peng2022pocket2mol}) and three diffusion models applied on point clouds (\emph{DiffSBDD}~\citep{schneuing2022structure}, \emph{TargetDiff}~\citep{guan20233tdiff} and \emph{DecompDiff}~\citep{guan23ddiff}).
Unlike other baselines, DecompDiff (the current state of the art) relies on a rule-based algorithm (AlphaSpace2, proposed by~\citet{rooklin2015alphaspace}) to compute subpockets and decomposable priors during sampling. 
All methods with the exception of DecompDiff---including \modelname---uses OpenBabel~\citep{o2011obabel} to assign bonds from generated atom coordinates.

\vspace{-.1in}\paragraph{Metrics.}
We evaluate performance using similar metrics as previous work\footnote{We use the implementation of \url{https://github.com/guanjq/targetdiff}.}. For each method, we sample 100 ligands for each of the 100 protein pockets. We measure affinity with three metrics using AutoDock Vina~\citep{eberhardt2021autodock}: \emph{VinaScore} computes the binding affinity (docking score) of the molecule on its original generated pose, \emph{VinaMin} performs a local energy minimization on the ligand followed by dock scoring, \emph{VinaDock} performs full re-docking (search and scoring) between ligand and target. \emph{High affinity} computes the percentage of generated molecules that has lower VinaDock score than the reference ligand.
We measure drug-likeness, \emph{QED}~\citep{bickerton2012quantifying}, and synthesizability, \emph{SA}~\citep{ertl2009estimation}, scores of generated molecules with RDKit~\citep{landrum2016RDKit}. \emph{Diversity} is computed by averaging the Tanimoto distance (in RDKit fingerprints) of every pair of generated ligands per pocket~\citep{bajusz2015tanimoto}. The \emph{\# atoms / molecules} is the average number of (heavy) atoms per molecules.

We also compute metrics proposed in PoseCheck~\citep{harris2023benchmarking}\footnote{Code available at \url{https://github.com/cch1999/posecheck}.}. \emph{Steric clash} computes the number of clashes between generated ligands and their associated pockets. \emph{Strain energy} measures the difference between the internal energy of generated molecule's pose (without pocket) and a relaxed pose (computed using Universal Force Field~\citep{rappe1992uff} within RDKit). 
\emph{Interaction fingerprints} describe intramolecular interactions between the generated ligands and protein pockets. See~\citep{harris2023benchmarking} for more details about these metrics.

\subsection{Results}
\paragraph{Binding affinity.}
\autoref{tab:xdocked_res} compares models on CrossDocked2020 \emph{test set} in terms of binding affinity and molecular properties. For each metric, we compute the mean and the median over all generated molecules. The row \emph{Reference} shows results of ground-truth ligands provided on the test set.

\begin{figure}[!t]
    \begin{center}
    \centerline{\includegraphics[width=1.\linewidth]{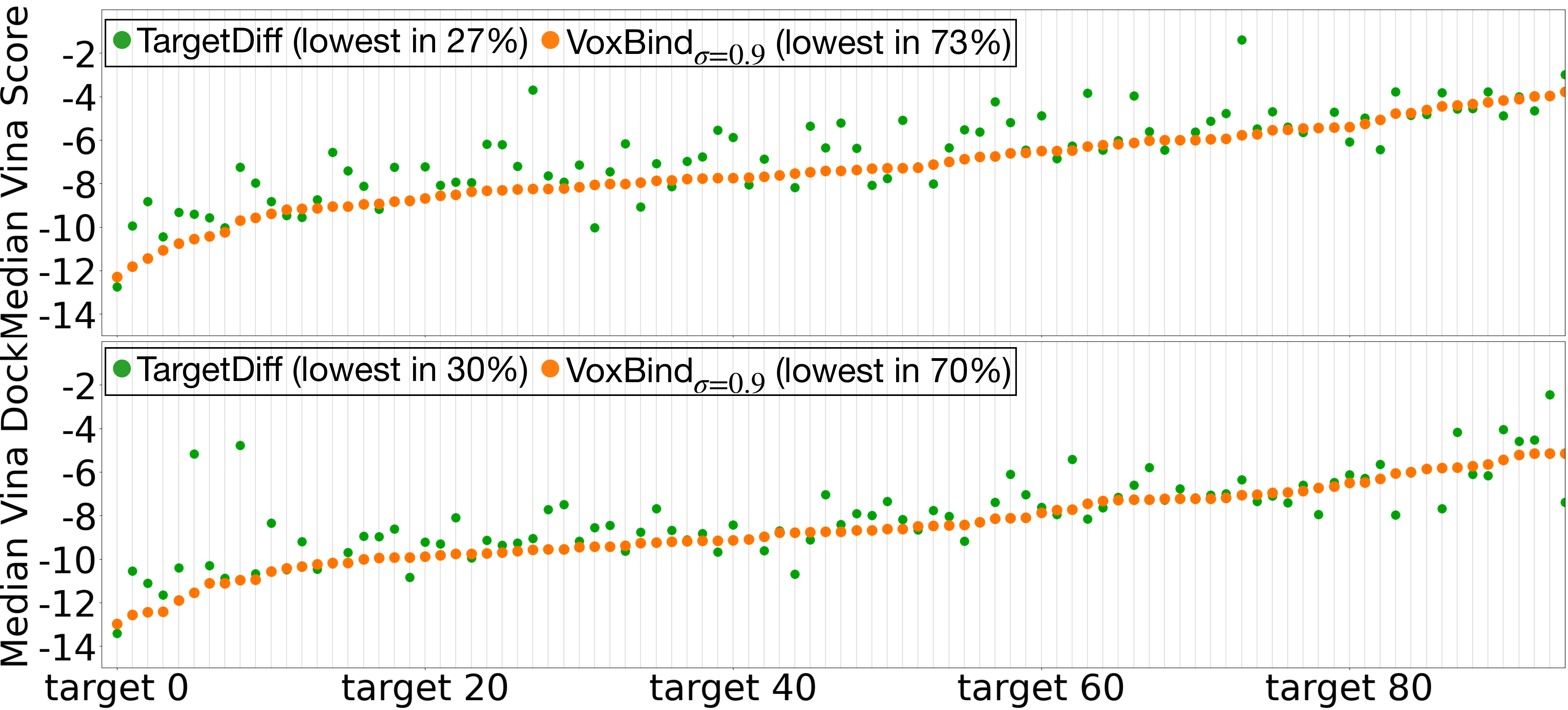}}
    \vskip -0.1in
    \caption{Median VinaScore and VinaDock (score on generated and redocked poses, respectively) of all generated molecules for each target on the test set (lower is better). Pockets are sorted by \modelname$_{\sigma=0.9}$'s score.}
    \label{fig:median_vina_dock}
    \end{center}
    \vskip -0.2in
\end{figure}

Our models achieve better results than TargetDiff (the best model trained under same assumptions) and DecompDiff (which relies on different training assumptions) in most metrics evaluated. The molecules generated by \modelname$_{\sigma=0.9}$ and \modelname$_{\sigma=1.0}$ have \emph{better binding affinity}. In particular, our models generate molecules with \emph{better pose} than other methods: their docking score with generated pose (VinaScore) is better/similar to that of methods after energy minimization (VinaMin).

\autoref{fig:median_vina_dock} further compares our models with TargetDiff. It shows the median VinaScore and VinaDock (generated and redocked poses, respectively) of all generated molecules for each target on the test set. The generated molecules from \modelname$_{\sigma=0.9}$ have better (computational) affinity on 73\% of tested protein pockets.

\vspace{-.1in}\paragraph{Molecular properties.}
\autoref{tab:xdocked_res} also shows results on molecular properties. \modelname~achieves \emph{better QED and SA} than diffusion-based models. The molecules generated by our models are particularly \emph{more diverse} than baselines.

We note that our models \emph{capture better the distribution of number of atoms per molecules} than the baselines. The autoregressive models generate molecules considerably smaller than the reference, while the diffusion model baselines tend to sample larger molecules. Our models achieve similar numbers as DecompDiff without the extra sampling assumptions. Unlike diffusion-based approaches, \emph{our model captures this distribution implicitly and does not require sampling the number of atoms beforehand}.

We also compare models in terms of ring statistics on \autoref{fig:qualitative_res} (appendix): we show the histogram of number of rings per molecule (top), the histograms of ring sizes (\ie, number of atoms per ring) (mid) and the histograms of fraction of aromatic atoms per molecule (bottom) for all generated molecules from different methods. In all cases, our models capture reference ring properties more favourably than baselines.

\begin{figure}[!t]
    \begin{center}
    \centerline{\includegraphics[width=\linewidth]{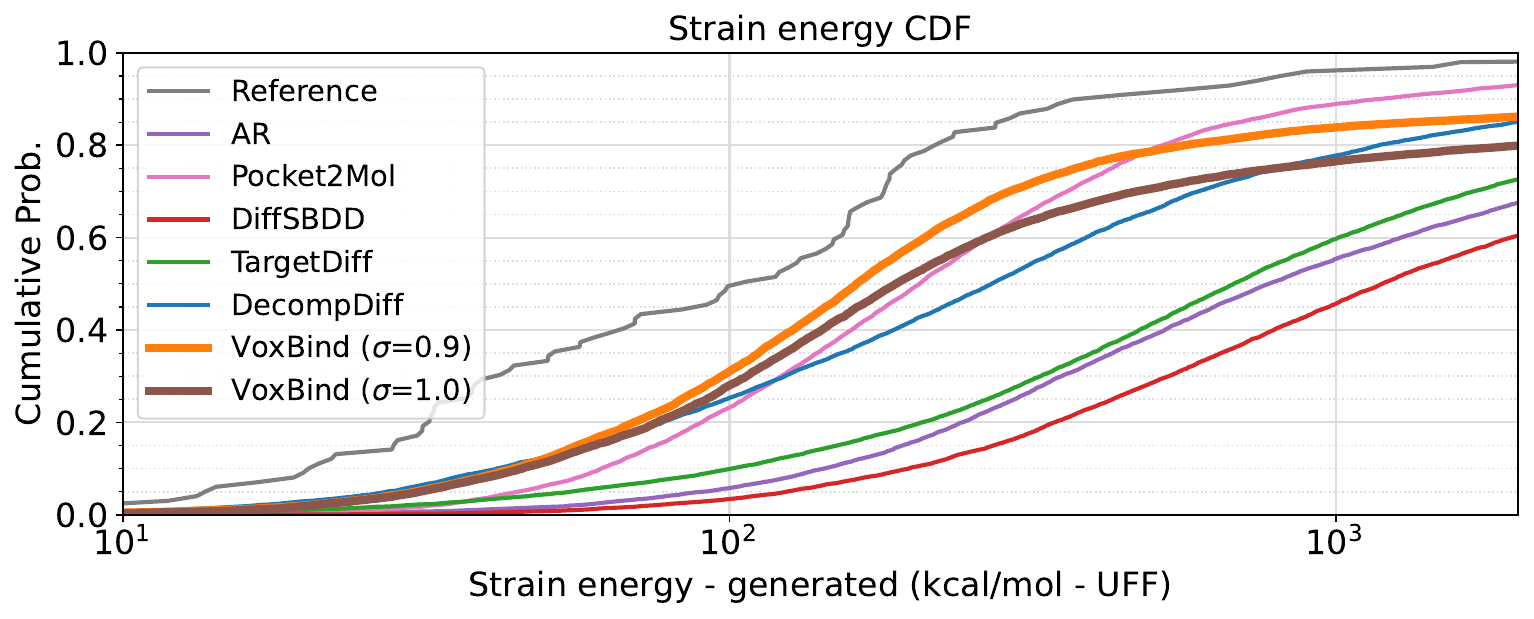}}
    \vskip -0.1in
    \caption{The cumulative distribution function of strain energy of molecules on their generated pose.}
    \label{fig:strain_energies}
    \end{center}
    \vskip -0.2in
\end{figure}

\vspace{-.1in}\paragraph{Molecular structures.}
First, we analyse the strain energies of molecules (as defined by PoseCheck~\citep{harris2023benchmarking}). \autoref{fig:strain_energies} shows the cumulative distribution function (CDF) of strain energies (on the raw generate poses) for molecules from different models. The reference set has median strain energy of $102.5$ kcal/mol, while the three best performing models, Pocket2Mol, \modelname$_{\sigma=0.9}$  and \modelname$_{\sigma=1.0}$ have $205.8$, $161.9$ and $188.3$ kcal/mol, respectively.
\autoref{fig:energy_min_and_rigid_rmsd}(a) (appendix) shows the CDF of strain energies after local minimization is performed on each molecule. The strain energies are largely reduced, but conclusions remain unchanged: \modelname's molecules has \emph{lower strain energies} than those generated by diffusion models.
Additionally, \autoref{fig:energy_by_rotatable_generated} and \autoref{fig:energy_by_rotatable_minimized} (appendix) show boxplots of strain energy per rotatable bonds, for the generated and minimized poses respectively, of molecules generated by different method.

\begin{figure}[!t]
    \begin{center}
    \centerline{\includegraphics[width=\linewidth]{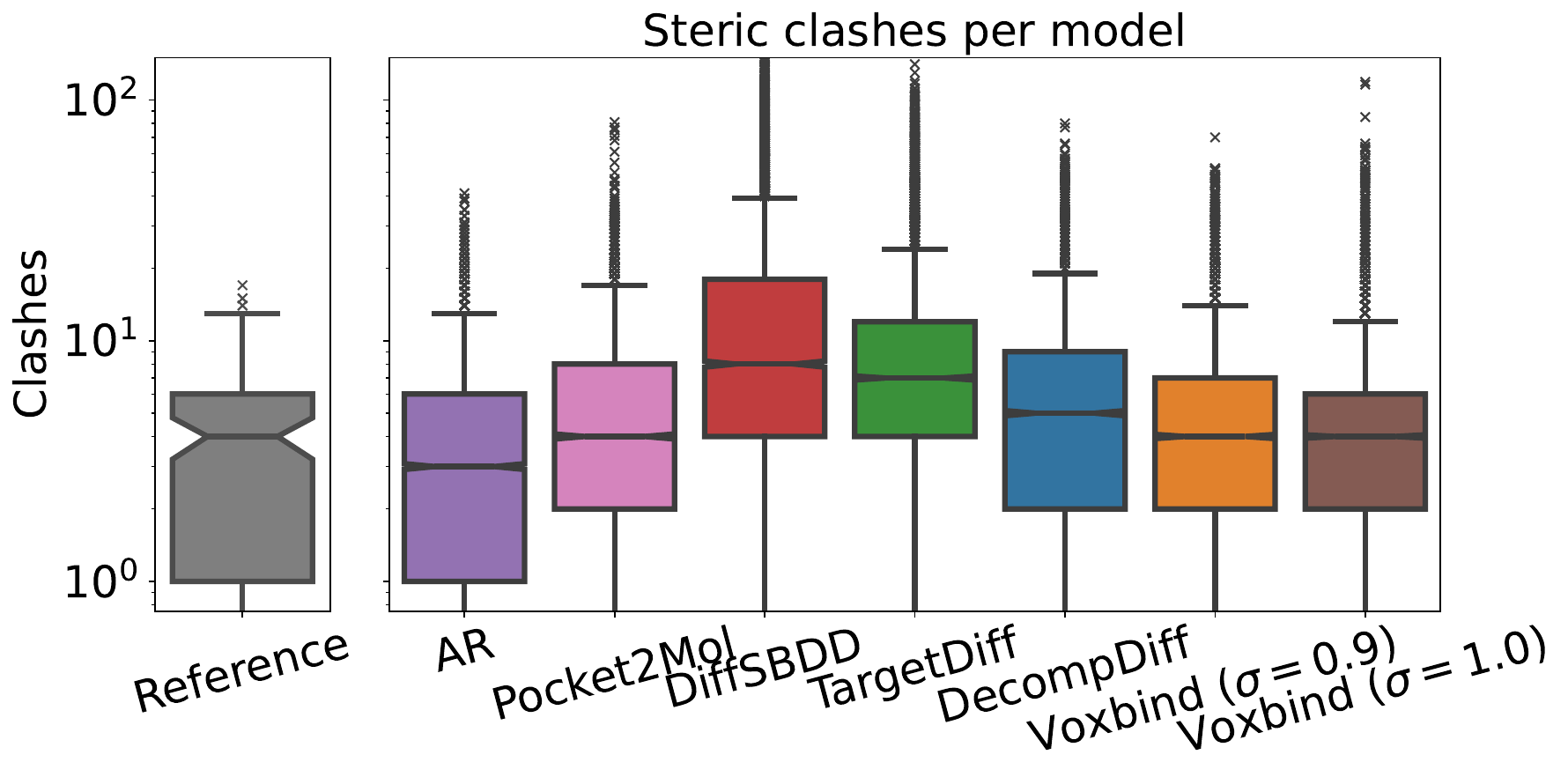}}
    \vskip -0.1in
    \caption{Number of steric clashes (lower is better) for the reference set and for molecules generated by each model.}
    \label{fig:clashes}
    \end{center}
    \vskip -0.2in
\end{figure}

\begin{figure*}[!t]
    \begin{center}
    \centerline{\includegraphics[width=1.\linewidth]{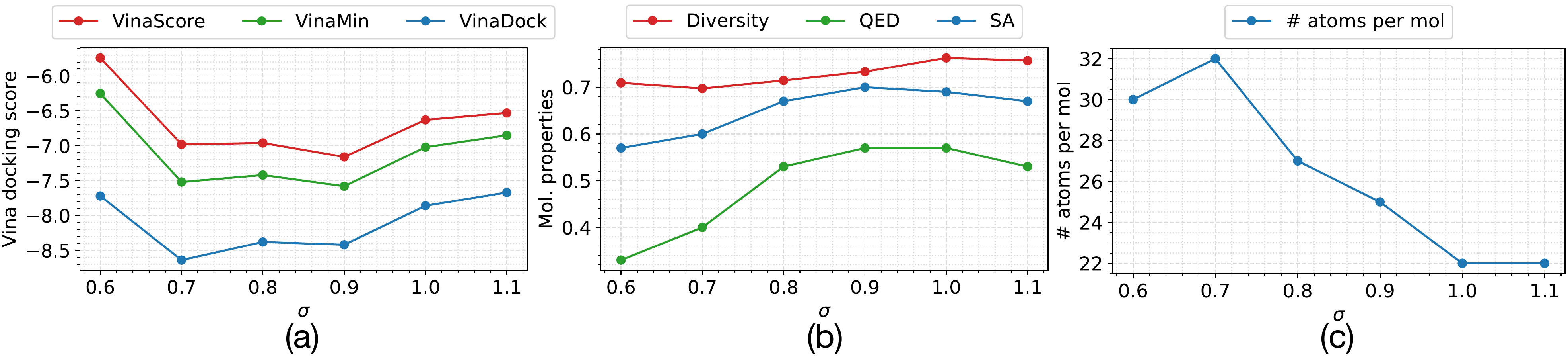}}
    \vskip -0.15in
    \caption{Effect of different noise level on (a) Vina metrics (VinaScore, VinaMin, VinaDock, lower is better), (b) molecular properties metrics (QED, SA and diversity, higher is better) and (c) number of atoms per molecules. In this experiment, we generate 100 molecules for the 100 pockets on validation set.}
    \label{fig:ablation_sigma}
    \end{center}
\vskip -0.35in
\end{figure*}
Next, we measure the consistency of rigid fragments/sub-structures (\eg, all carbons in a benzene ring should be in the same plane). We measure such consistency the same way  as~\citet{guan20233tdiff}. First, we optmize each molecule with  Merck molecular force field (MMFF)~\citep{halgren1996merck}. Then, we break the molecules into non-rotable fragments and, for each fragment, compute the RMSD between atoms coordinates before and after optimization. \autoref{fig:energy_min_and_rigid_rmsd}(b) (appendix) shows the RMSD for rigid fragments of different sizes. \modelname~models generate \emph{rigid fragments that are more consistent} than those of other approaches.

Finally, we compare how models capture different atomic bond distances. 
\autoref{tab:bond_distances} (appendix) shows the Jensen-Shannon divergence (JSD) between models and the reference set, for different types of bonds. We observe that \modelname~achieves lower JSD in most bond types, particularly double bonds and aromatic bonds. 
Moreover, we compare how models capture different atomic bond distances. 
Finally, \autoref{fig:interactions} shows the frequency for the following four types of protein-ligand interactions considered by PoseCheck.

\vspace{-.1in}\paragraph{Steric clashes.}
\autoref{fig:clashes} shows the number of steric clashes per ligand (on their generated poses). Our models generate ligands \emph{with less clashes} than other methods (with the exception of AR, which perform poorly in all other metrics).
In fact, \modelname$_{\sigma=0.9}$ and \modelname$_{\sigma=1.0}$ have mean clash score of $5.1$ and $5.3$, while AR, Pocket2Mol, DiffSBD, TargetDiff and DecompDiff have mean of $4.2$, $5.8$, $15.4$, $10.8$ and $7.1$ clashes, respectively. The reference set, for comparison, has mean clash score of $4.7$.

\vspace{-.1in}\paragraph{Sampling time.}  
\modelname~is \emph{more efficient} at sampling than other methods, sometimes an order of magnitude faster.
\modelname~takes, on average, 492.2 seconds to generate 100 valid molecules with one A100 GPU, while Pocket2Mol, TargetDiff and DecompDiff requires 2,544, 3,428 and 6,189 seconds respectively (baselines running times taken from~\citet{guan23ddiff}).

\vspace{-.15in}\paragraph{Qualitative results.} \autoref{fig:qualitative_res}, and \autoref{fig:qualitative_res_extra}, \autoref{fig:qualitative_res_extra2} on appendix show examples of generated ligands from our model.

\subsection{Ablation studies}
\paragraph{Noise level}\label{sec:ablation_noise_level}
The noise level is an important hyperparameter of our model. 
As the noise level increases, it becomes easier to sample from the smoothed distribution. However, denoising becomes harder. To find the best empirical noise level, we train our model with different noise levels (all other hyperparameters are kept the same). Then, we compare the quality of the samples conditioned on pockets from the \emph{validation} set. \autoref{fig:ablation_sigma} shows how different metrics change as the noise level changes. We found that $\sigma=.9$ and $\sigma=1.0$ achieve the best results on the validation set and chose to report results with those two levels. 

\vspace{-.1in}\paragraph{Data augmentation}\label{sec:ablation_data_aug}
We also train a version of \modelname~without data augmentation. In terms of molecular properties (QED and SA), the performance is very similar. However, we see a big difference in terms of binding affinity metrics. 
\autoref{fig:ablation_aug} (appendix) 
shows the median VinaScore of \modelname~trained with and without data augmentation for each target (we use $\sigma=0.9$ in this experiment). The model trained with data augmentation has better affinity score on the generated poses on 85\% of the targets.

\section{Conclusion}
This paper presents \modelname, a new score-based generative model for SBDD. We extended the neural empirical Bayes formalism and the walk-jump sampling algorithm to the conditional setting (cWJS) and show that our model outperforms previous work on an extensive number of computational metrics on a popular benchmark, while being faster to generate samples.

Our approach is flexible and allows us to adapt sampling to many practical SBDD settings, without any additional training. For example, if we start with a ligand that binds to the pocket of interest, we can initialize the MCMC chain with a smooth version of that ligand instead of noise.\footnote{We can easily adapt the sampling to other applications, \eg, scaffold hopping or linking, by initializing the chains with molecular sub-parts fragments. These are related to ``in-painting'' tasks in computer vision.}

The flexibility and expressivity of 3D U-Nets comes at the cost of increased memory consumption.
Therefore, the volume in 3D space that we can process is bounded by GPU memory. As shown empirically, our approach works well for drug-like molecule generation. However, more work (\eg, on data representation and architecture) needs to be done to scale generation to larger molecules like nucleic acids and proteins. 
Additional future work includes better modeling of synthetic accessibility or integrating pocket dynamics into the generation process.

\vspace{-.15in}\paragraph{Broader Impact.} 
Structure-based drug design is an important component in modern drug discovery research and development. This is a very long and challenging endeavor that involves many steps.
In this paper we propose a new pocket-conditional generative model, which deals with one of these steps.
There is still a lot of work that need to be done to validate these kinds of models in practice (\eg, wet-lab experimental validation, clinical trials, etc). That been said, if successful, advances in this field can directly impact quality of human health.
Like many other modern powerful technologies, we need to ensure that these models are deployed in ways that are safe, ethical, accountable and exclusively beneficial to society. 

\vspace{-.15in}\paragraph{Acknowledgements.} 
The authors are grateful to Charlie Harris for assistance in running the Posecheck benchmark.
We also would like to thank the Prescient Design team for helpful discussions and Genentech’s HPC team for providing a reliable environment to train/analyse models.

\bibliography{main}
\bibliographystyle{icml2024}

\newpage
\appendix
\onecolumn
\section{Additional implementation details}
\subsection{Voxelized molecules}\label{sec:appendix_vox_mols}
We represent molecules as voxelized atomic densities. We follow the same approach as~\citep{pinheiro2023voxmol}, as it has been shown to work well in practice for drug-like molecule generation. We apply the same approach for ligands and protein pockets. First, we convert each atom (of each molecule) into 3D Gaussian-like densities:
\begin{equation}
V_a(d, r_a) = \text{exp}\Big(-
\frac{d^2}{(.93\cdot r_a)^2} 
\Big),
\label{eq:atomic_density}
\end{equation}
where $V_a$ is defined as the fraction of occupied volume by atom $a$ of radius $r_a$ at distance $d$ from its center. We use radius $r_a=.5$ for all atom on ligand molecules (as in \citep{pinheiro2023voxmol}) and use the Van der Waals radii for pocket atoms.
Then, we compute the occupancy of each voxel in the grid:
\begin{equation}
\text{Occ}_{i,j,k} = 1 - \prod_{n=1}^{N_a} \big(1 - V_{a_n}(\Vert C_{i,j,k} - x_n \Vert, r_{a_n} )\big),
\end{equation}
where $\{a_n\}_{n=1}^{N_a}$ are the atoms of the molecule, $C_{i,j,k}$ are the coordinates (i,j,k) in the grid and $x_n$ is the coordinates of the center of atom $n$ \cite{li2014modeling}. Each atom type occupy a different channel grid (similar to R,G,B channels of images) and they take values between 0 and 1.
We use use a grid with $64^3$ voxels with resolution of $.25$\AA~per voxel. We model hydrogens implicitly and consider seven chemical elements for ligands (C, O, N, S, F, Cl and P) and four for pockets (C, O, N, S). This results in voxel grids with dimensions $d_x=7\times64\times64\times64$ and $d_z=4\times64\times64\times64$ for ligand and protein protein pockets, respectively.

Every ligand and its pocket pair are centered around the center of the mass of the ligand. During training, each training sample is augmented random augmentation (applied to both ligand and pocket).
These augmentations are made of random translation (uniform value between [-1,1] on 3D coordinates) and rotation (uniform value between [0, 2$\pi$) on three Euler angles). These augmentations are applied on the point cloud before voxelizing them.
We use the python package PyUUL~\cite{orlando2022pyuul} to generate the voxel grids from the raw molecules (\texttt{.sdf} or \texttt{.pdb} format).

\subsection{Sampling}\label{sec:appendix_sampling}
The walk-jump sampling approach is very flexible and allows us to configure sampling in different ways. For example, we can chose the number of walk steps between jumps, the maximum number of walk steps per chain or the number of chains run in parallel. Different sampling hyperparameters can change the statistics of samples, \eg, we increase sampling speed and reduce diversity if we reduce the number of walk steps between jumps.

Therefore, we decided to fix a set of sampling hyperparameters for benchmark purposes. In all our experiments we generate samples in the following way (100 chains in parallel): (i) initialize a chain $y_0$ (from noise and a pocket) and walk 400 Langevin MCMC \emph{warm-up} steps to get to $y_{400}$, (ii) create a batch (size 100) with copies of the tensor $y_{400}$ (iii) \emph{walk} 100 steps then \emph{jump} to estimate a clean molecule at step $\hat{x}_{500}$, (iv) return to step (i) and repeat until generate 100 valid samples.  
When sampling conditioned on pocket and ligand, we take $50$ warm-up steps and $50$ walk steps. We found this setting to provide a good trade-off in terms of performance/speed on validation set. However, it is by no means optimal and results can possibly be further improved by finding a better sampling recipe. 

We extract the atomic coordinates from the generated voxelized grids following the same approach as \citep{pinheiro2023voxmol}: we set all voxel values less than $0.1$ to $0$ then run a peak finding algorithm to get the 3D coordinates of each atom. The identity of the atom is the channel in the voxel grid. 
Once we have the set of atoms (types and coordinates), we follow previous work and use OpenBabel~\citep{o2011obabel} to assign bonds to the atoms.

The qualitative samples in this paper are sampled with 400 warm-up step followed by 100 steps for 10,000 steps on a \emph{single} MCMC chain.

\newpage
\section{Additional results}
\begin{figure}[!htb]
\begin{center}
\centerline{\includegraphics[width=1.0\linewidth]{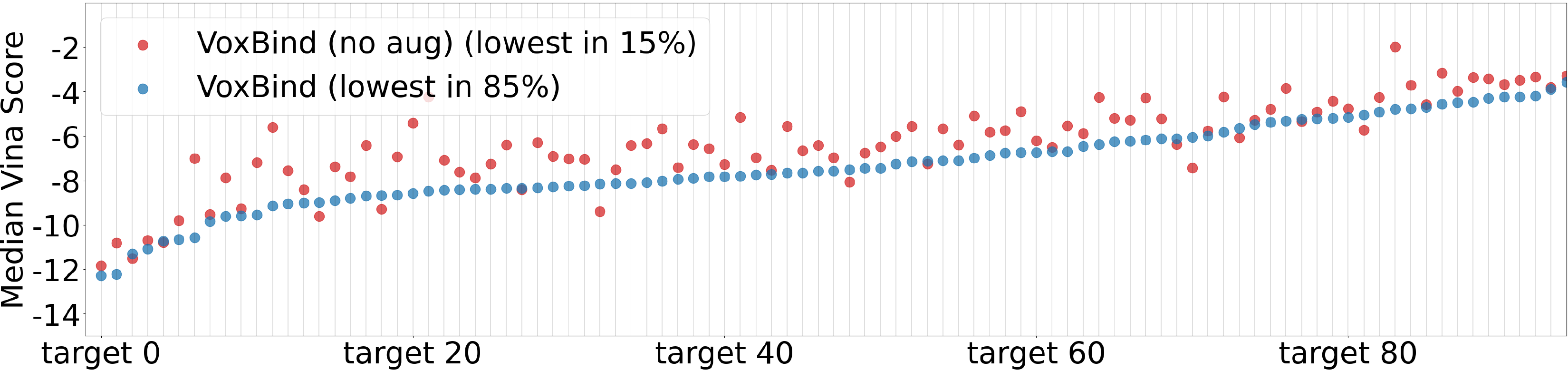}}
\vskip -0.15in
\caption{Median VinaScore of all generated molecules for each target on the test set (sorted by \modelname's score). We compare \modelname~trained with and without data augmentation with $\sigma=0.9$.}
\label{fig:ablation_aug}
\end{center}
\vskip -0.3in
\end{figure}

\begin{figure}[!h]
\vskip 0.2in
\begin{center}
\centerline{\includegraphics[width=1.\linewidth]{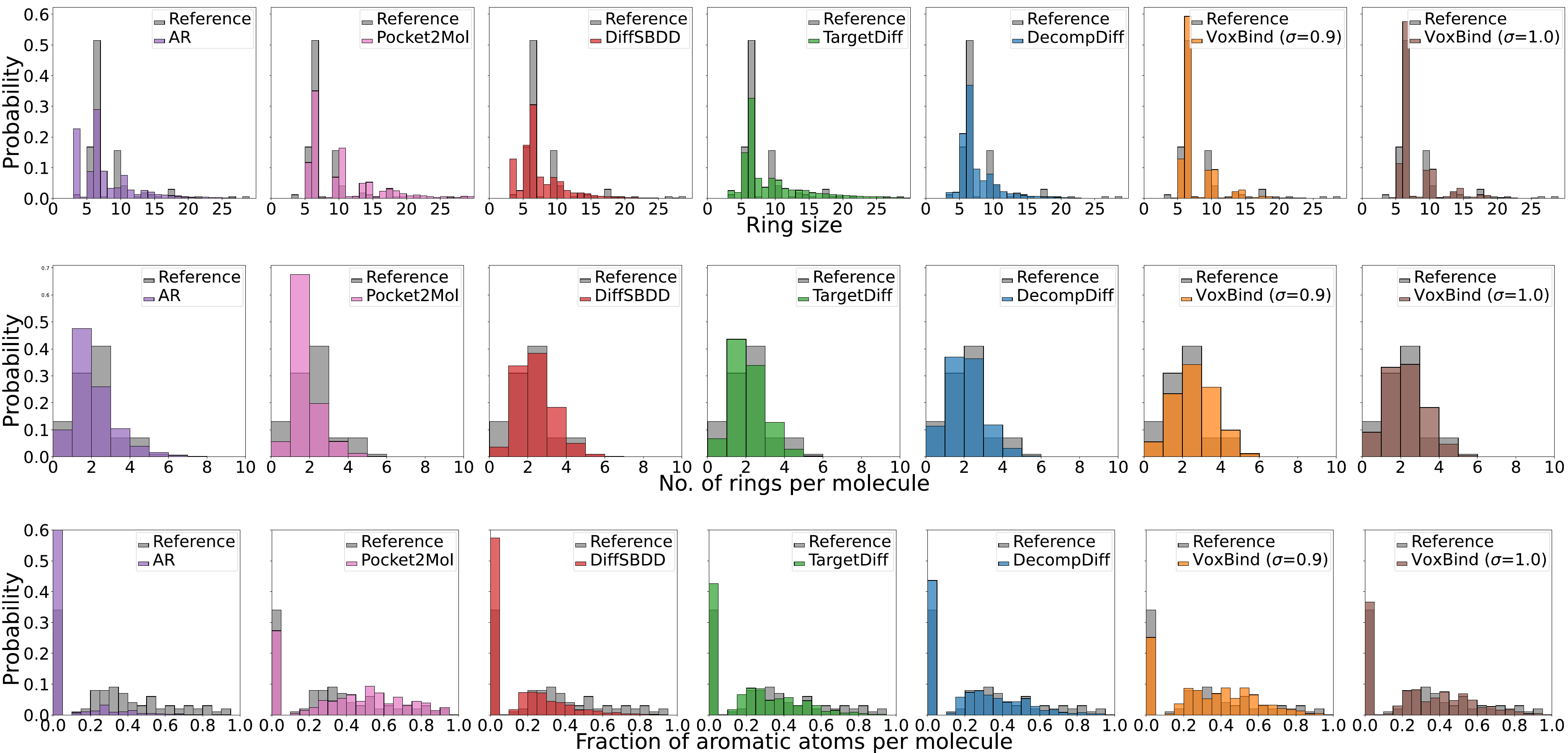
}}
\caption{Histograms showing (top) distributions ring sizes for all rings, (mid) distributions of number of rings per molecule and (bottom)  distributions of fraction of aromatic atoms per molecule in all generated molecules from different methods. The underlying distribution from the reference is also shown in each plot.}
\label{fig:histogram_rings}
\end{center}
\vskip -0.2in
\end{figure}

\begin{figure}[!htb]
\vskip 0.2in
\begin{center}
\centerline{\includegraphics[width=1.\linewidth]{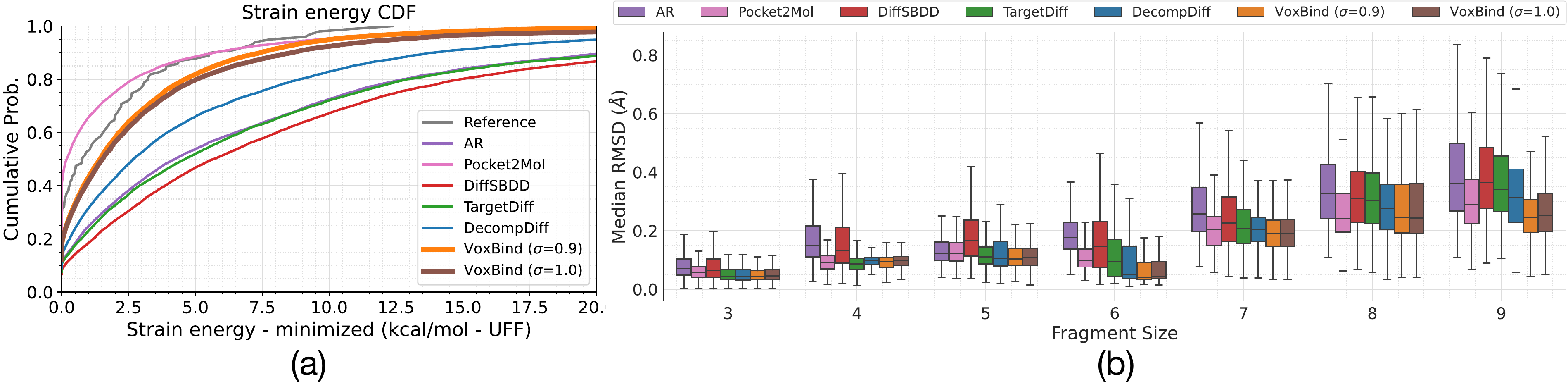}}
\vskip -0.15in
\caption{(a) The cumulative distribution function of strain energy of molecules after small force-field relaxation. As expected, the strain energies in this setting are much lower than with the raw generated poses. (b) Median RMSD ($\downarrow$) between fragments of generated molecules before and after force-field optimization.}
\label{fig:energy_min_and_rigid_rmsd}
\end{center}
\vskip -0.2in
\end{figure}

\begin{figure}[!htb]
    \begin{center}
    \centerline{\includegraphics[width=.75\linewidth]{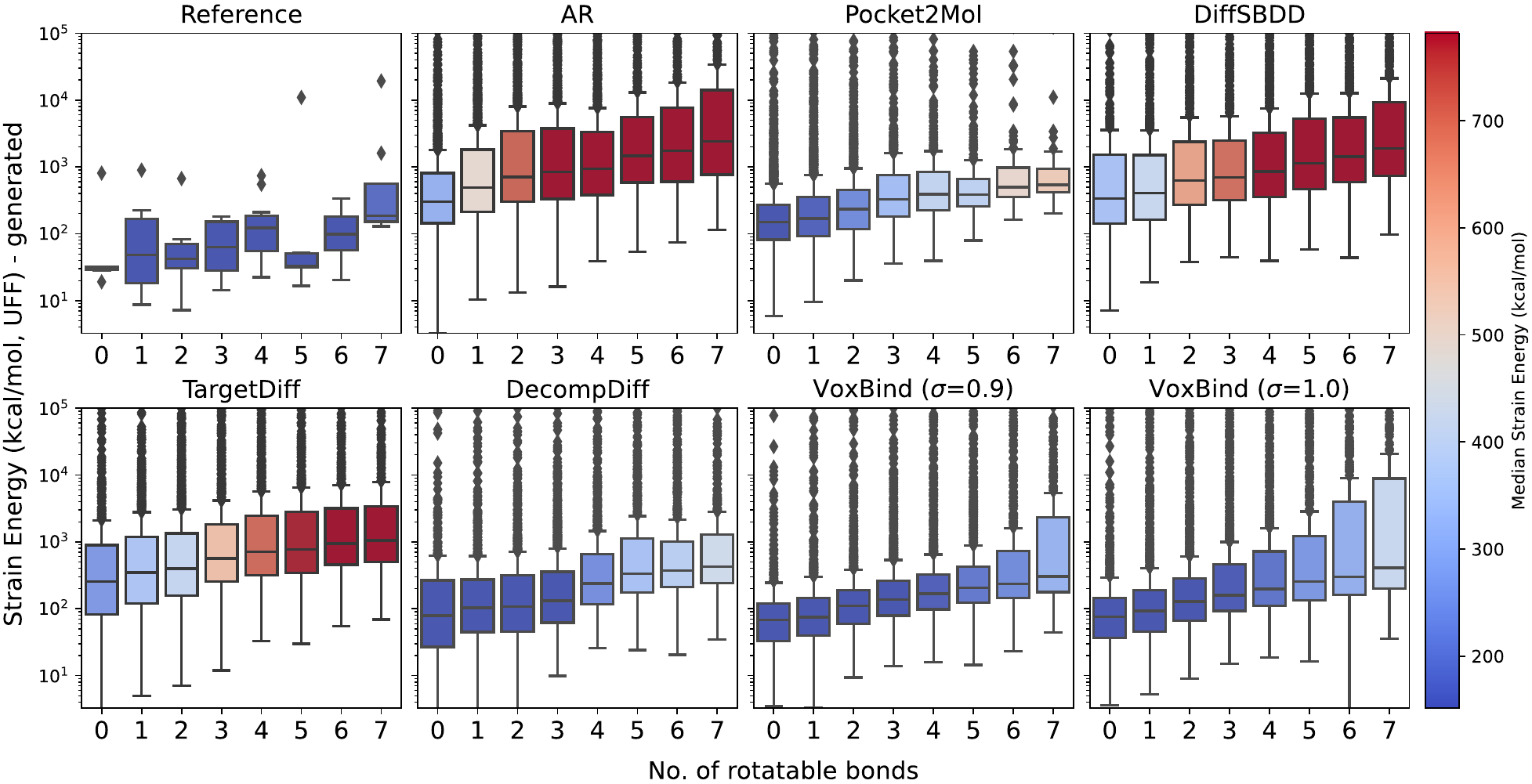}}
    \vskip -0.15in
    \caption{Boxplots of strain energies (lower is better) of generated molecules (on their generated poses) per number of rotatable bonds for all methods. Box color shows median strain value.}
    \label{fig:energy_by_rotatable_generated}
    \end{center}
    \vskip -0.2in
\end{figure}
\begin{figure}[!htb]
    \begin{center}
    \centerline{\includegraphics[width=.75\linewidth]{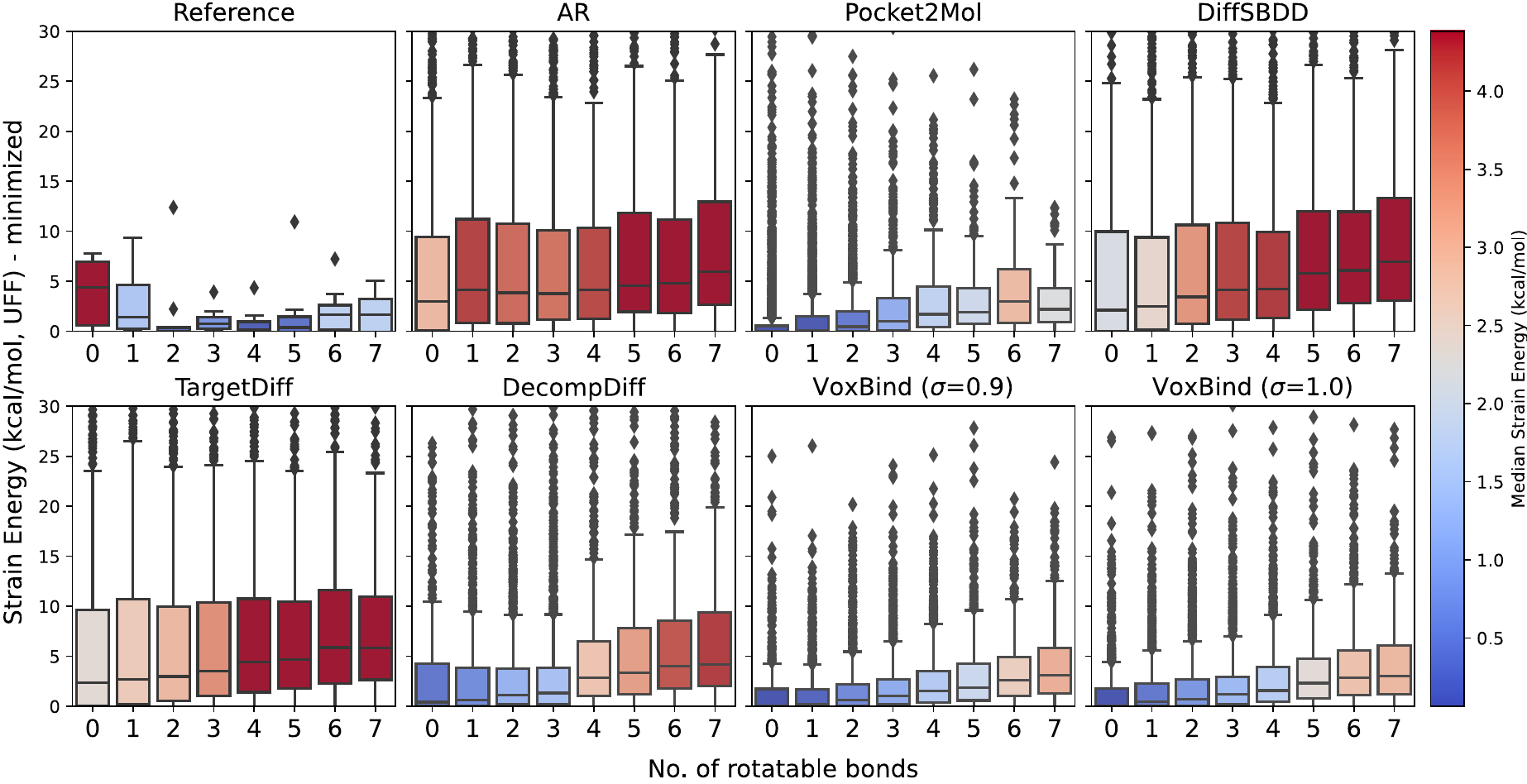}}
    \vskip -0.15in
    \caption{ Boxplots of strain energies (lower is better) of generated molecules (after local minimization) per number of rotatable bonds for all methods. Box color shows median strain value.}
    \label{fig:energy_by_rotatable_minimized}
    \end{center}
    \vskip -0.2in
\end{figure}

\setchemfig{atom sep=20pt}
\begin{table}[!htb]
  \caption{Jensen-Shannon divergence ($\downarrow$) between reference and generated molecules' distributions of bond distances. The symbols ``-- '', ``='', ``:'' represent single bond, double bond and aromatic bond, respectively. For each metric, we {\bf bold} and \underline{underline} the best and second best methods, respectively. \label{tab:bond_distances}}
  \begin{center}
  \resizebox{.8\textwidth}{!}{%
  \begin{tabular}{l|ccccccccc}
    & \chemfig{C-C} & \chemfig{C=C} &  \chemfig{C-N} & \chemfig{C=N} & \chemfig{C-O} & \chemfig{C=O} & \chemfig{C:C} & \chemfig{C:N} \\
    \shline
    AR         & .609 & .620 & .474 & .635 & .492 & .558 & .451 & .552 \\
    Pocket2Mol & .496 & .561 & .416 & .629 & .454 & .516 & .416 & .487 \\
    TargetDiff & .369 & \textbf{.505} & .363 & .550 & .421 & .461 & .263 & .235 \\
    DecompDiff$^\dagger$ & \underline{.359} & .537 & \textbf{.344} & .584 & \underline{.376} & .374 & .251 & .269 \\
    \hline
    \modelname$_{\sigma=0.9}$ & .372 & \underline{.528} & \underline{.351} & \underline{.528} & .400 & \textbf{.326} & \underline{.215} & \textbf{.186} \\
    \modelname$_{\sigma=1.0}$ & \textbf{.357} & .533 & .354 & \textbf{.418} & \textbf{.354} & \underline{.335} & \textbf{.210} & \underline{.191} \\
    
  \end{tabular}
  }
  \end{center}
\end{table}

\begin{figure}[!ht]
\vskip 0.2in
\begin{center}
\centerline{\includegraphics[width=.75\linewidth]{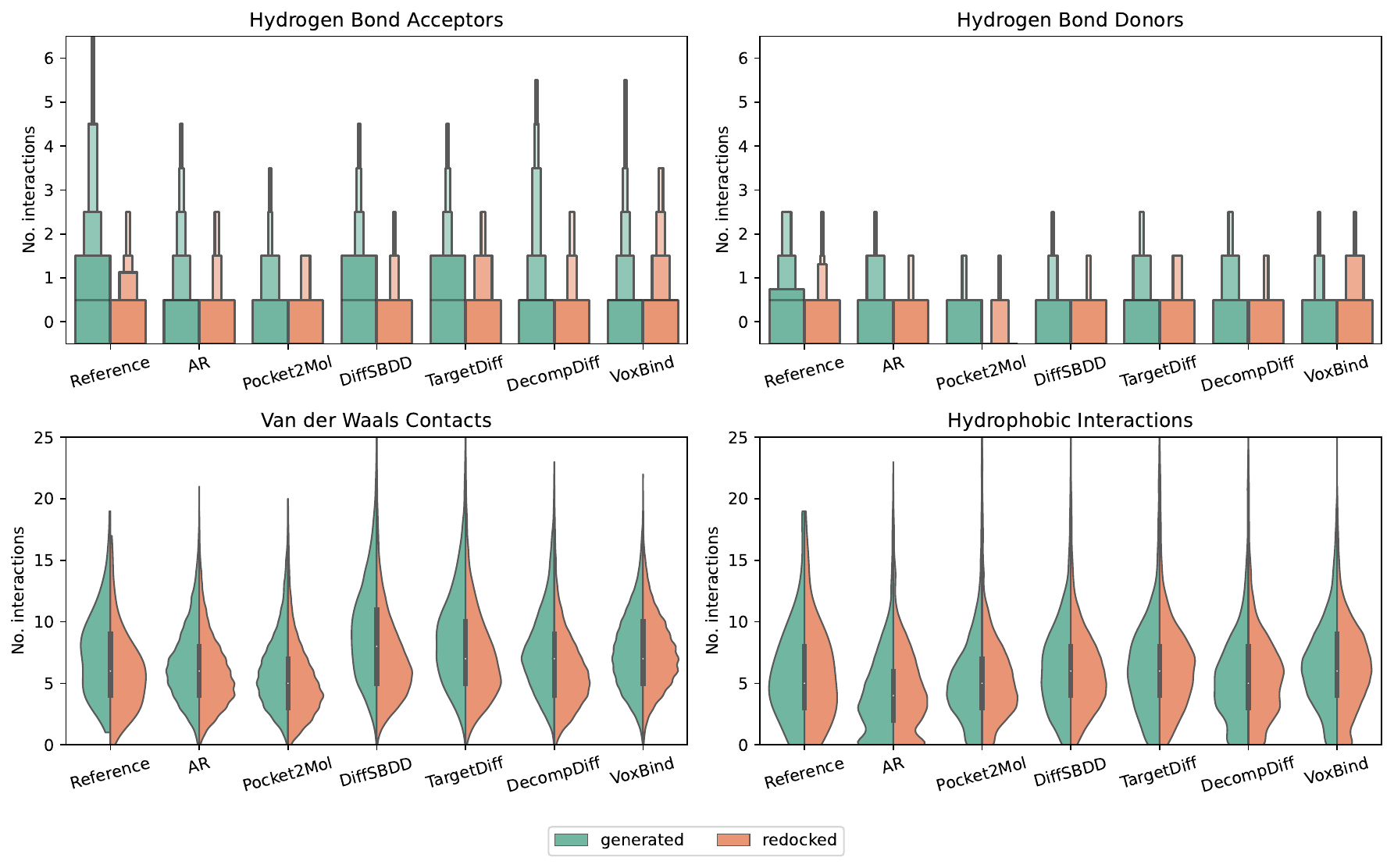
}}
\caption{ Protein-ligand interactions in generated poses (green) and redocked poses (orange). The frequency of (a) hydrogen bond acceptors, (b) hydrogen bond donors, (c) Van der Waals contacts and (d) hydrophobic interactions are shown.}
\label{fig:interactions}
\end{center}
\vskip -0.2in
\end{figure}

\begin{figure}[h]
\vskip 0.2in
\begin{center}
\centerline{\includegraphics[width=1.\linewidth]{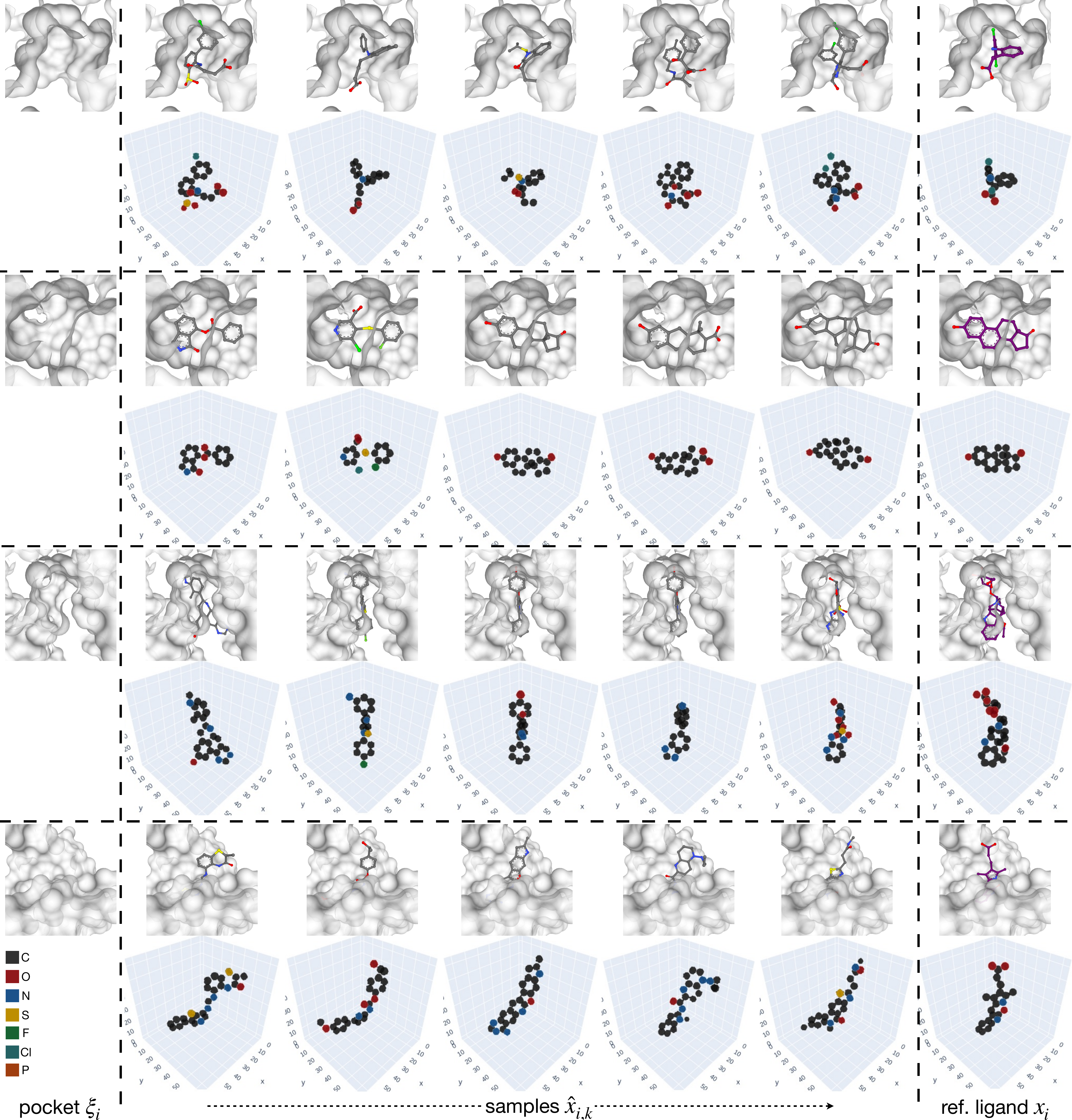}}\vskip -0.15in
\caption{Example of generated ligands $\hat{x}_{i,k}$ given pocket $\pocket_i$. Each row represents a single chain of samples for a given protein pocket (\texttt{2i2z}, \texttt{1ogx}, \texttt{3u57}, \texttt{4jlc} from top to bottom). The samples from each row are generated from the same MCMC chain. The provided ground-truth ligands are shown on the last column.}
\label{fig:qualitative_res_extra}
\end{center}
\vskip -0.2in
\end{figure}

\begin{figure}[hb]
\vskip 0.2in
\begin{center}
\centerline{\includegraphics[width=1.\linewidth]{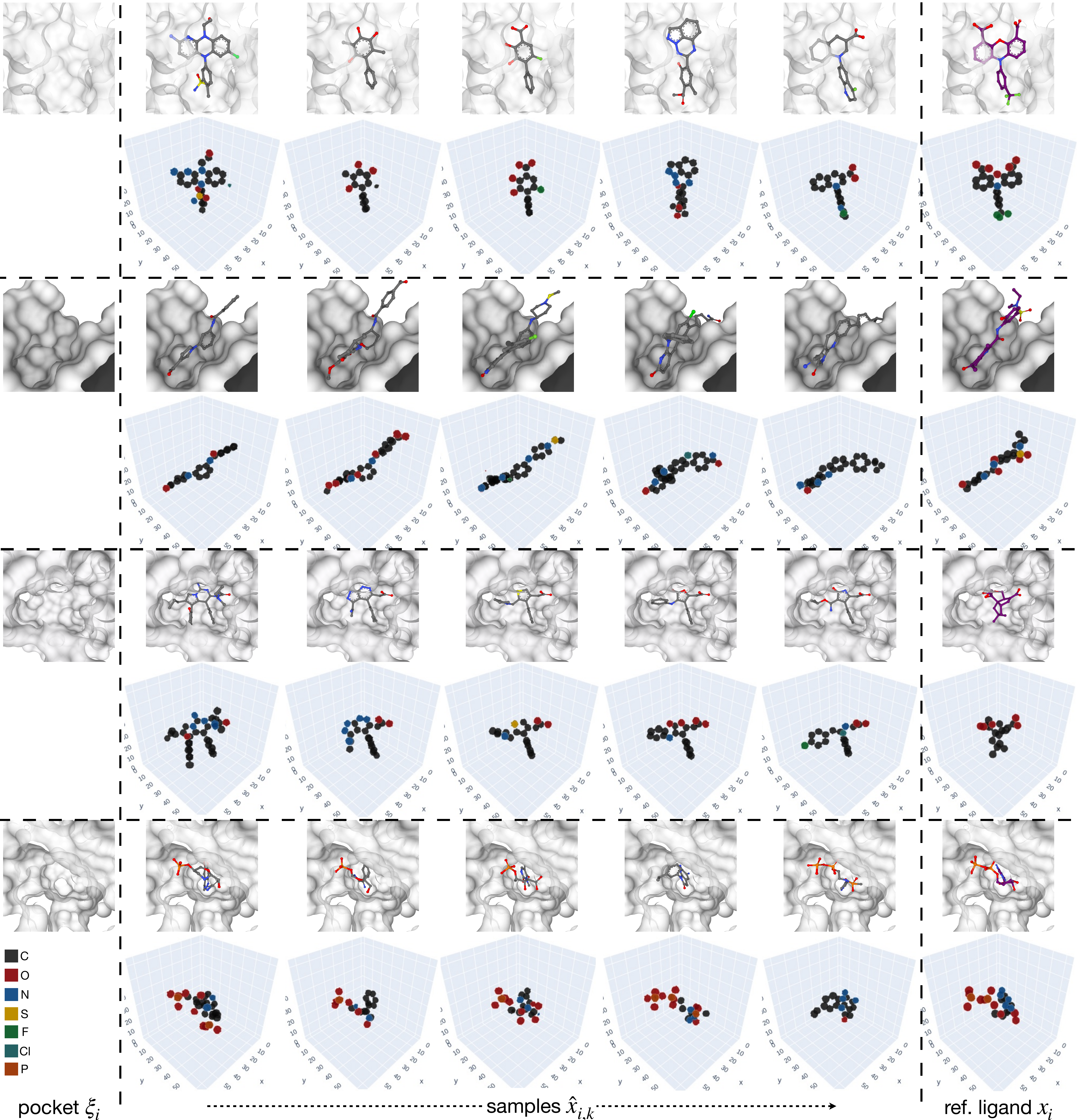}}\vskip -0.15in
\caption{Example of generated ligands $\hat{x}_{i,k}$ given pocket $\pocket_i$. Each row represents a single chain of samples for a given protein pocket (\texttt{5aks}, \texttt{5crz}, \texttt{5l1v}, \texttt{1e8h} from top to bottom). The samples from each row are generated from the same MCMC chain. The provided ground-truth ligands are shown on the last column.}
\label{fig:qualitative_res_extra2}
\end{center}
\vskip -0.2in
\end{figure} 

\end{document}